\newcommand{\myparagraph}[1]{\textbf{#1}}
\icmltitlerunning{Link Prediction with Persistent Homology: An Interactive View}
\newtheorem{theorem}{Theorem}
\newtheorem{lemma}[theorem]{Lemma}
\begin{document}

\twocolumn[
\icmltitle{Link Prediction with Persistent Homology: An Interactive View}




\begin{icmlauthorlist}
\icmlauthor{Zuoyu Yan}{pku}
\icmlauthor{Tengfei Ma}{ibm}
\icmlauthor{Liangcai Gao}{pku}
\icmlauthor{Zhi Tang}{pku}
\icmlauthor{Chao Chen}{sbu}
\end{icmlauthorlist}

\icmlaffiliation{pku}{Wangxuan Institute of Computer Technology, Peking University, Beijing, China}
\icmlaffiliation{sbu}{Department of Biomedical Informatics, Stony Brook University, New York, USA}
\icmlaffiliation{ibm}{T. J. Watson Research Center, IBM, New York, USA}

\icmlcorrespondingauthor{Chao Chen}{chao.chen.1@stonybrook.edu}
\icmlcorrespondingauthor{Liangcai Gao}{glc@pku.edu.cn}

\icmlkeywords{Persistence homology, Graph Neural Network, Link Prediction}

\vskip 0.3in
]



\printAffiliationsAndNotice{}  

\begin{abstract}
Link prediction is an important learning task for graph-structured data. 
In this paper, we propose a novel topological approach to characterize interactions between two nodes. Our topological feature, based on the extended persistent homology, encodes rich structural information regarding the multi-hop paths connecting nodes. Based on this feature, we propose a graph neural network method that outperforms state-of-the-arts on different benchmarks. 
As another contribution, we propose a novel algorithm to more efficiently compute the extended persistence diagrams for graphs. This algorithm can be generally applied to accelerate many other topological methods for graph learning tasks.
\end{abstract}

\section{Introduction}

Graph-structured data is very common in our life. Learning from graphs is important in various scientific and industrial domains  \cite{zhang2020deep,wu2020comprehensive}. In this paper, we focus on the link prediction task, i.e., to learn to predict whether an edge exists between two target nodes, conditioned on their attributes and local connectivity \cite{liben2007link,schlichtkrull2018modeling,lu2011link}. Link prediction is an important step in knowledge discovery in various applications, e.g., recommendation systems \cite{koren2009matrix,adamic2003friends}, knowledge graph completion \cite{teru2020inductive}, protein-protein interactions \cite{coulomb2005gene}, and gene prediction \cite{nagarajan2015predicting}.

Classic link prediction methods \cite{barabasi1999emergence,zhou2009predicting,brin2012reprint} use hand-crafted connectivity features and enforce strong assumptions of the distributions of links, e.g., nodes with similar connectivity features tend to be connected.
Better performance has been achieved by comparing the similarity of nodes in the embedding space \cite{perozzi2014deepwalk}, which encodes more global connectivity information. 
In recent years, graph neural networks (GNNs) have achieved state-of-the-art link prediction performance as they exploit graph connectivity information and node attributes in a completely data driven manner.
However, even for GNNs, graph connectivity information such as node degrees is beneficial;

it provides contextual information for the graph convolutional operations \cite{kipf2016semi,qiu2018network,ye2020curvature}.

For link prediction, an effective strategy is the direct modeling of the interaction between two target nodes. The path distance of nearby nodes from the target nodes has been shown useful \cite{zhang2018link}. 
However, these distance-based methods mainly focus on the ``closeness'' between target nodes, but do not explicitly model the ``richness of connections'' between them. 
In Figure \ref{fig:teaser}, we show examples with the same distance encoding. But the connections in the right example are much richer than in the left example. 
It is conceivable that nodes with a wealth of multi-hop connections have a better chance to share an edge. 

\begin{figure}[h!]
	\centering
	\includegraphics[width=.7\columnwidth]{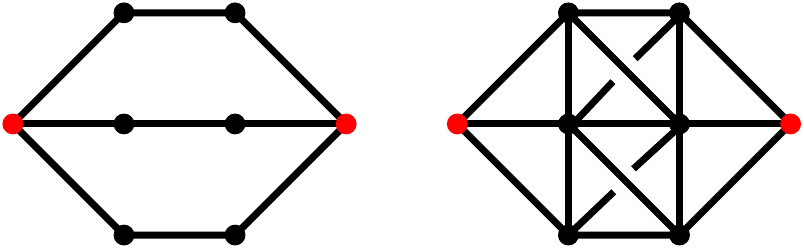}
	\vspace{-.1in}
	\caption{Two example graphs. In both cases, black nodes have the same distance from the red target nodes (either (2,1) or (1,2)). But the richness of the connections is very different. On the left, there are only three connections between the target nodes. On the right, there are many more connections between the targets. }
	\label{fig:teaser}
	\vspace{-.1in}
\end{figure}

\begin{figure*}[btp]
	\centering
	\subfigure[]{
		\begin{minipage}[t]{0.22\linewidth}
			\centering
			\includegraphics[width=\columnwidth]{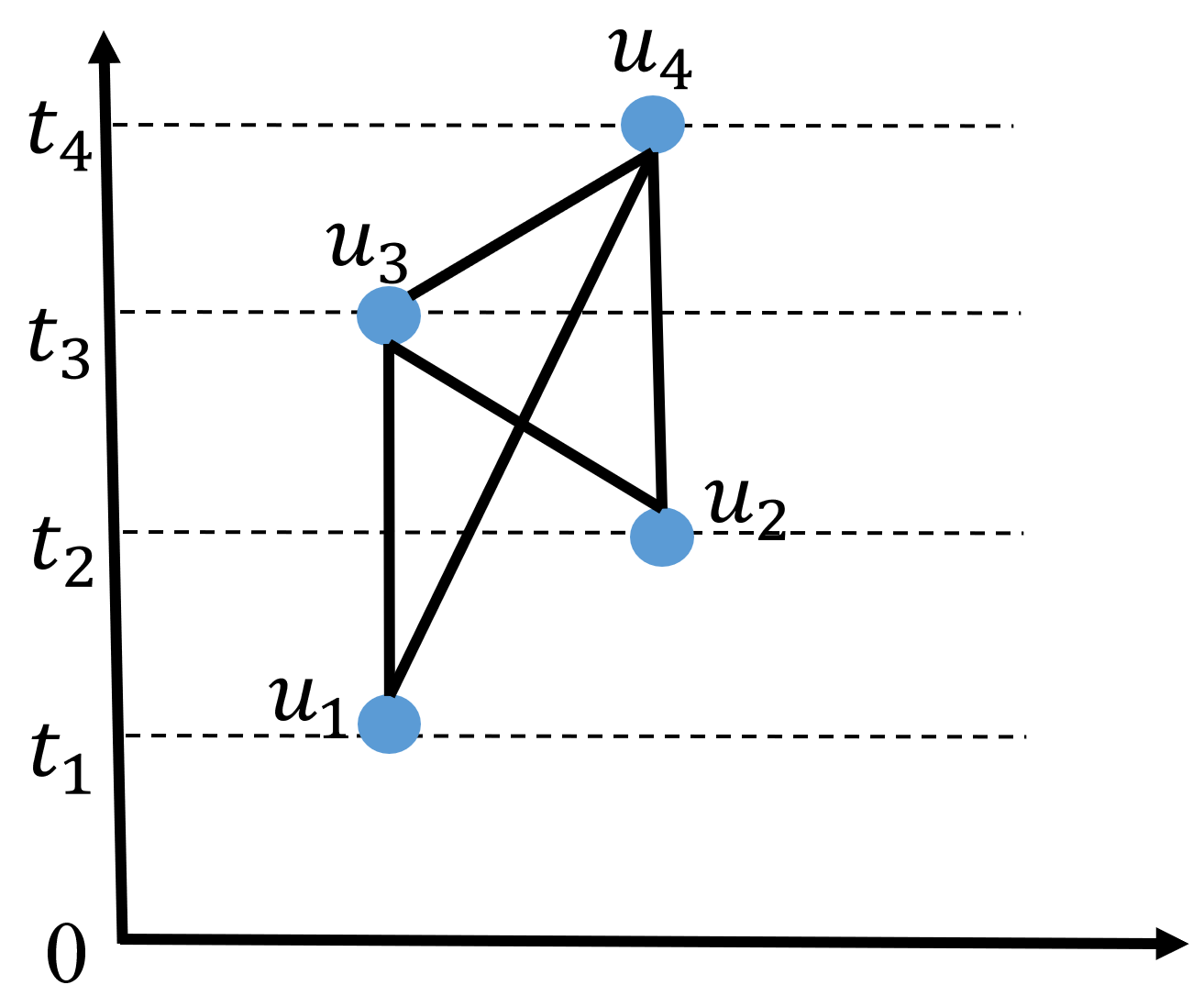}
		\end{minipage}
	}%
	\subfigure[]{
		\begin{minipage}[t]{0.46\linewidth}
			\centering
			\includegraphics[width=\columnwidth]{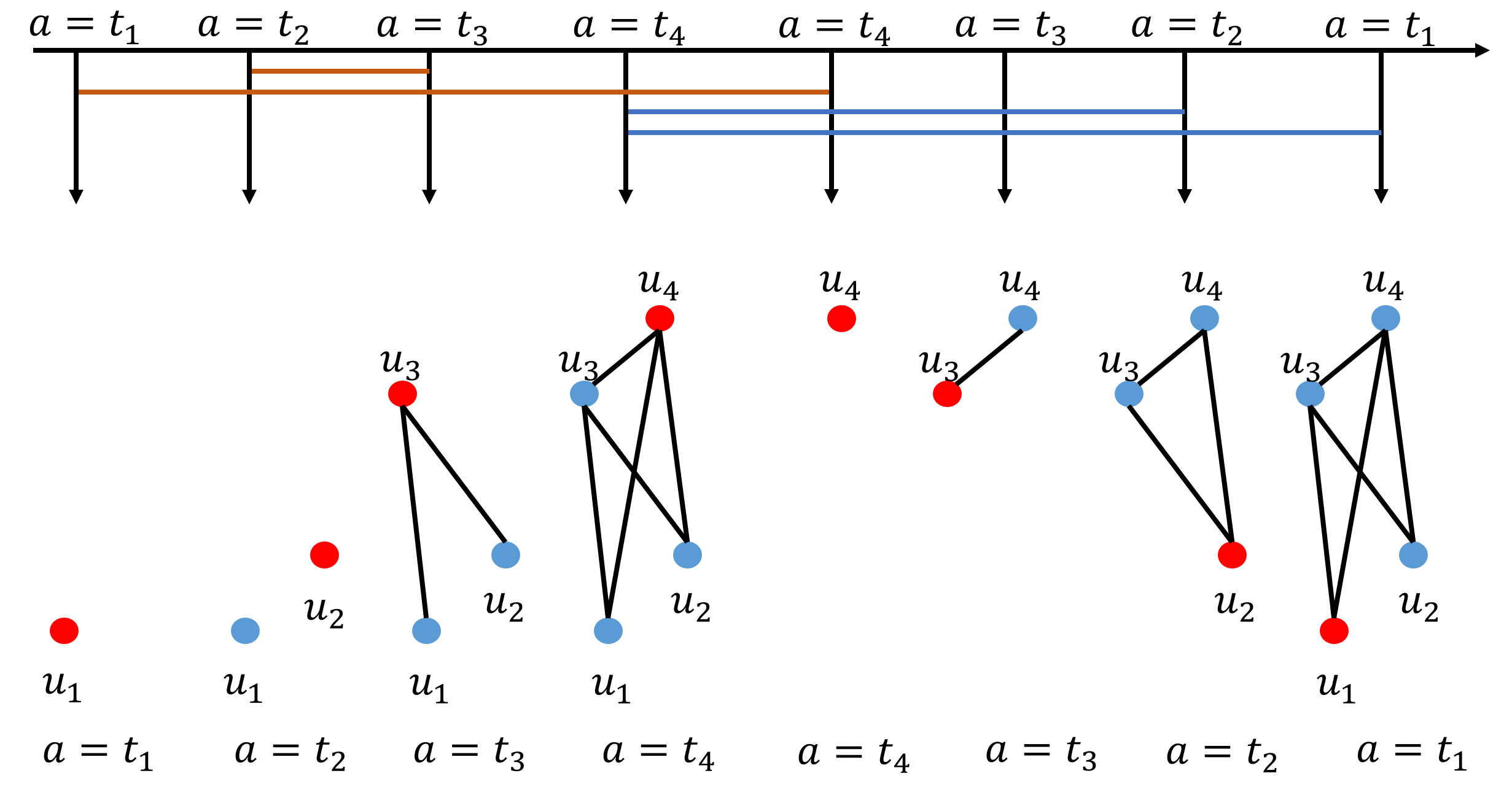}
		\end{minipage}%
	}%
	\subfigure[]{
		\begin{minipage}[t]{0.26\linewidth}
			\centering
			\includegraphics[width=\columnwidth]{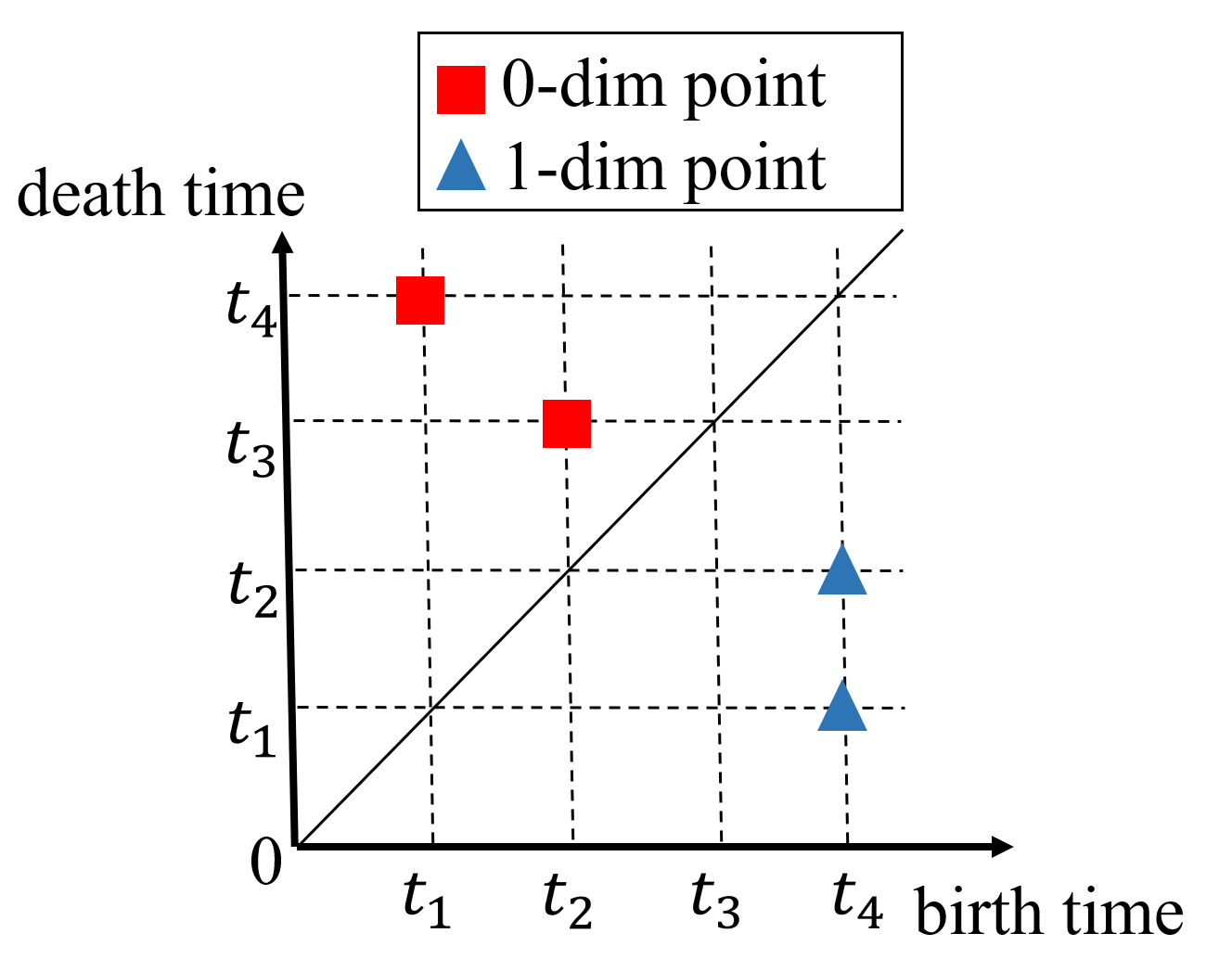}
		\end{minipage}
	}%
	\centering
	\vspace{-.1in}
	\caption{An illustration of extended persistent homology. (a) We plot the input graph with a given filter function. The filter value for each node is $f(u_1)=t_1$, $f(u_2)=t_2$, $f(u_3)=t_3$, $f(u_4)=t_4$. (b) The ascending and descending filtrations of the input graph. The bars of brown and blue colors correspond to the life spans of connected components and loops respectively. The first four figures are the ascending filtration, while the last four figures denote the descending filtration. In the ascending filtration, $f(uv) = max(f(u),f(v))$, while in the descending filtration, $f(uv) = min(f(u),f(v))$. (c) In the resulting extended persistence diagram, red and blue markers correspond to 0-dimensional and 1-dimensional topological structures. There are two blue markers, corresponding to two loops $(u_1u_3,u_3u_4,u_4u_1)$, $(u_2u_3,u_3u_4,u_4u_2)$. The range of filter function $f$ for these two loops are $[t_1,t_4]$, $[t_2,t_4]$ respectively. These ranges are encoded as the coordinates of the blue markers.}
	\label{fig:motivation}
	\vspace{-.15in}
\end{figure*}

To exploit the richness of connections between nodes, we propose a novel method based on the theory of persistent homology \cite{edelsbrunner2000topological,edelsbrunner2010computational}, which encodes high-order structural information of the graph via algebraic topology. The theory captures topological structures of arbitrary shape and scale, e.g., connected components and loops, and encodes them in a robust-to-noise manner.
To predict whether two given nodes $u$ and $v$ are connected, our method explicitly counts the number of loops within their vicinity. 
This count essentially measures the complexity of connections between $u$ and $v$, which can be indicative of the existence of links.
The method not only counts the numbers of loops, but also measures the range of distance from $u$ and $v$ for each loop. See Figure \ref{fig:motivation} for an illustration. This rich set of structural information is mapped into a topological feature space and is integrated into a graph neural network. As will be shown in experiments, our topological loop-counting graph neural network achieves better performance for link prediction.

Persistent homology has been used for learning with graphs \cite{zhao2019learning,hofer2020graph,hofer2017deep,carriere2020perslay}. However, most existing works use it as a global structural feature for the whole graph. These global features, although proven useful for graph classification tasks, cannot describe the interaction between a pair of nodes.

In this paper, we propose a \emph{pairwise topological feature} to capture the richness of the interaction between a specific pair of target nodes.
We compute topological information within the vicinity of the target nodes, i.e., the intersection of the $k$-hop neighborhoods of the nodes. It has been shown that such \emph{local enclosing graph} carries sufficient information for link prediction \cite{zhang2018link}.  

To measure the saliency of topological structures, we also introduce a distance-based filter function to measure the interaction between nodes.
These choices ensure our pairwise topological feature to be informative of the interaction between the targets. 

We propose \emph{topological loop-counting graph neural network (TLC-GNN)} by injecting the pairwise topological feature into the latent representation of a graph neural network. Our method achieves state-of-the-art performance in link prediction.
Another contribution of this paper is on the computational side.
To capture the full information of loops, we use the extended persistent homology \cite{cohen2009extending}.

The commonly used algorithm is based on a matrix reduction algorithm that is similar to the Gaussian elimination. It is cubic to the input graph/subgraph size, $O((|V|+|E|)^3)$.\footnote{In theory, the fastest algorithm for persistence diagram has the same complexity as matrix multiplication \cite{milosavljevic2011zigzag}, i.e., $O((|V|+|E|)^\omega)$, in which $\omega=2.3728596$ \cite{alman2021refined}.} 
Since the computation needs to be executed on all training/validation/testing pairs of nodes, we could significantly benefit from a faster algorithm.
In this paper, we propose a novel algorithm for the extended persistent homology, specific for graphs. Instead of the expensive matrix reduction, our algorithm directly operates on the graph and is quadratic to the input graph/subgraph size, $O(|V||E|)$. This algorithm is not application specific and can be applied to other graph learning problems \cite{zhao2019learning,hofer2020graph,hofer2017deep,carriere2020perslay}.

In summary, our contribution is three-fold:
\begin{itemize}[topsep=0pt, partopsep=0pt,itemsep=0pt,parsep=0pt]
	\item We introduce a pairwise topological feature based on persistent homology to measure the complexity of interaction between nodes. We compute the topological feature specific to the target nodes using a carefully designed filter function and domain of computation. 
	\item We use the pairwise topological feature to enhance the latent representation of a graph neural network and achieve state-of-the-art link prediction results on various benchmarks.
	\item We propose a general-purpose fast algorithm to compute extended persistent homology on graphs. The time complexity is improved from cubic to quadratic to the input size. It applies to many other persistent-homology-based learning methods for graphs.
\end{itemize}

\myparagraph{Outline.} In Section~\ref{sec:related}, we briefly introduce existing works on link prediction and on learning with topological information. In Section~\ref{sec:LP with PH}, we present details of extended persistent homology and our model, TLC-GNN. In Section~\ref{sec:acc}, we introduce a faster algorithm for extended persistent homology and prove its correctness. In Section~\ref{sec:exp}, we evaluate our method on synthetic and real-world benchmarks.

\section{Related Work}
\label{sec:related}

\myparagraph{Link prediction methods.}
Early works \cite{barabasi1999emergence,zhou2009predicting,brin2012reprint,jeh2002simrank} predict links based on node similarity scores measured within the local neighborhood of the two target nodes. 

These methods tend to have strong assumptions on the link distribution and do not generalize well. 
Graph embeddings have been used to encode more global structural information  for link prediction \cite{koren2009matrix,airoldi2008mixed,perozzi2014deepwalk,tang2015line,qiu2018network}. 
 
However, these methods only rely on graph connectivity and do not take full advantage of node attributes. 
 
In recent years, new GNN-based methods have been proposed to jointly leverage graph connectivity and node attributes.
\citet{zhang2018link} show that local enclosing subgraphs contains sufficient information, and propose a GNN to leverage such information. 
Considering the non-Euclidean nature of graph metrics, one may generalize graph convolution to the hyperbolic space \cite{chami2019hyperbolic,zhu2020graph}. However, most existing methods use either limited structural information or node embedding to represent edge features. They do not explicitly model the advanced topological information that arises in node interaction.

\myparagraph{Learning with topological features.}
Persistent homology \cite{edelsbrunner2000topological,edelsbrunner2010computational} captures structural information from the data using the language of algebraic topology \cite{munkres2018elements}. It captures multi-scale topological structures in a provably robust manner \cite{cohen2007stability}. 

Different learning methods for persistent homology have been proposed, such as direct vectorization \cite{adams2017persistence}, kernel machines \cite{reininghaus2015stable,kusano2016persistence,carriere2017sliced}, convolutional neural networks \cite{hofer2017deep}, topological loss \cite{chen2019topological,hu2019topology,hofer2019connectivity}, and generative model \cite{wang2020topogan}.

For graph-structured data, topological features have been used for node classification \cite{zhao2020persistence} and graph classification \cite{zhao2019learning,hofer2020graph,carriere2020perslay}. 
However, these existing methods cannot model interactions between nodes as desired in link prediction tasks.
\citet{bhatia2018understanding} also use persistent homology for link prediction. But their method only exploits 0-dimensional topology, i.e., whether the target nodes are connected or not within the local neighborhood. This cannot capture the complexity of connection as we intend to model. 

\section{Link Prediction with Persistence Homology}
\label{sec:LP with PH}

In this section, we introduce our topological loop-counting graph neural network (TLC-GNN), which computes persistent homology based on the chosen subgraph and incorporates the pairwise topological feature into a graph neural network. The input of the model includes two target nodes and a subgraph encoding their topological information. The output is the probability of whether an edge exists between the two target nodes.

In section~\ref{sec:extended} and ~\ref{sec:MR}, we will briefly introduce the extended persistent homology and its computation, respectively. In section~\ref{sec:tlcgnn}, we will illustrate how to combine the topological feature with a standard graph neural network.

\subsection{Extended Persistent Homology}
\label{sec:extended}
In this section, we provide a brief introduction to extended persistent homology and refer the reader to \cite{cohen2009extending} for details. In the setting of graphs, the data only contain 0-dimensional (connected components) and 1-dimensional (loops) topological structures\footnote{In graphs, there is no triangle. As a result, all the loops are topological structures (non-bounding cycles). }. We define simplices to be all elements in the graph, including nodes and edges. The combinatorial relationship between simplices determines the topological structure of the graph, and persistent homology can count the number of these topological structures. Besides, persistent homology measures the saliency of all topological structures in view of a scalar function defined on all simplices, called the filter function.
For example, let $V$, $E$ be the sets of nodes and edges, and $X$ be the union of $V$ and $E$, namely the set of simplices. The filter function for nodes $f: V\rightarrow \mathbf{R}$ can be defined as the sum of the distance to the target nodes. Then we can further define the filter function for edge $uv$ as the maximum value of $f(u)$ and $f(v)$. 

Given the filter function for all the simplices in a graph, we can define $X_a$ as the sublevel set of $X$: $X_a = \{x|f(x) \leq a, x\in X\}$. Here $a$ is a threshold in the filter function $f$, and $X_a$ is the subset of $X$ whose filter value are not greater than $a$. As the threshold $a$ increases from $-\infty$ to $\infty$, we obtain the ascending filtration of $X$: $\emptyset = X_{-\infty} \subset ...\subset X_{\infty} = X$. An example is shown in the first half of Figure~\ref{fig:motivation} (b). 

With the increasing of threshold $a$, the sublevel set grows from empty to $X$, and new topological structures gradually appear (born) and disappear (die). For example, two connected components
appear when reaching $X_{t_1}$ and $X_{t_2}$ (for simplicity, we replace $X_{t_i}$ with $X_i$). One of the connected components 
disappears in $X_3$. Besides, two loops 
appear when reaching $X_4$.

\myparagraph{Extended persistence.} However, with the settings above, we find that some structures, such as the whole connected component and all the loops, will never disappear. 
To address this limitation of ordinary persistent homology, \citet{cohen2009extending} propose extended persistence by introducing another filtration of the superlevel set $X^a = \{x|f(x)\geq a, x \in X\}$. Let $a$ decrease from $\infty$ to $-\infty$, and we can obtain the descending filtration of $X$: $\emptyset = X^{\infty} \subset ...\subset X^{-\infty} = X$\footnote{Technically, the superlevel set should be the relative homology groups $(X,X^{a})$ in the second half of the filtration.}. An example descending filtration is shown in the second half of Figure \ref{fig:motivation} (b). In the descending filtration, the filter function for an edge $uv$ is set differently, i.e., $f(uv)$ = $min(f(u),f(v))$.

For loops and the whole connected component in a graph, the death time can be defined as the filter value in the superlevel set when the structure appears again. For instance, the extended persistence point of the loop $\{ac,cd,da\}$ in Figure \ref{fig:motivation} is $(t_4, t_1)$. It is born when reaching $X_4$ in the ascending filtration and dies when reaching $X^1$ in the descending filtration. It is a bit counter-intuitive that the death time can be smaller than the birth time.

After capturing the birth, death times of all the topological structures, we encode them into a 2-D point set called persistence diagram. Each topological structure corresponds to one persistence point in the diagram. Its x and y coordinates are the birth and death times. In Figure~\ref{fig:motivation} (c), we show an extended persistence diagram.

After obtaining the extended persistence diagram, we can encode it into a vectorized feature called persistence image \cite{adams2017persistence}. 

Further details are available in the supplementary material.
\subsection{Matrix Reduction Algorithm for Extended Persistence Diagram}
\label{sec:MR}

In this section, we will introduce the algorithm to compute the extended persistence diagrams. Let $m$ be the total number of simplices ($m=|V|+|E|$). We write ($\kappa_1,\kappa_2,...,\kappa_m$) as the ascending sequence of simplices in $X$, i.e., $f(\kappa_1) < f(\kappa_2) < ... < f(\kappa_m)$\footnote{Without loss of generality, we assume the filter function of all nodes are distinct.}. Similarly, we write ($\lambda_1,\lambda_2,...,\lambda_m$) as the descending sequence of simplices in $X$. Every simplex will appear once in the ascending sequence and once in the descending sequence. 

To compute the extended persistence diagram, we need a binary valued matrix $M$ to encode the adjacency relationship between nodes and edges. Matrix $M$ is a $2m\times 2m$ matrix consisting of four $m\times m$ matrices: $M=\left[\begin{matrix} A & P \\ 0 & D\end{matrix} \right] $. Every column or row of $M$ corresponds to a simplex. In particular, the first $m$ columns of $M$ correspond to the ascending sequence of simplices $\kappa_1,...,\kappa_m$. The last $m$ columns of $M$ correspond to the descending sequence of simplices $\lambda_1,...,\lambda_m$. The setting is the same for the rows of $M$. Matrix $A$ encodes the relationship between all the simplices in the ascending sequence. Similar to the incidence matrix of a graph, 
$A[i,j]=1$ iff $\kappa_i$ is the boundary of $\kappa_j$, i.e., $\kappa_i$ is a node adjacent to the edge $\kappa_j$. The matrix $A$ of Figure~\ref{fig:motivation} is shown in Figure~\ref{fig:A}
\begin{figure}[ht]
	\begin{center}
		\centerline{\includegraphics[width=\columnwidth]{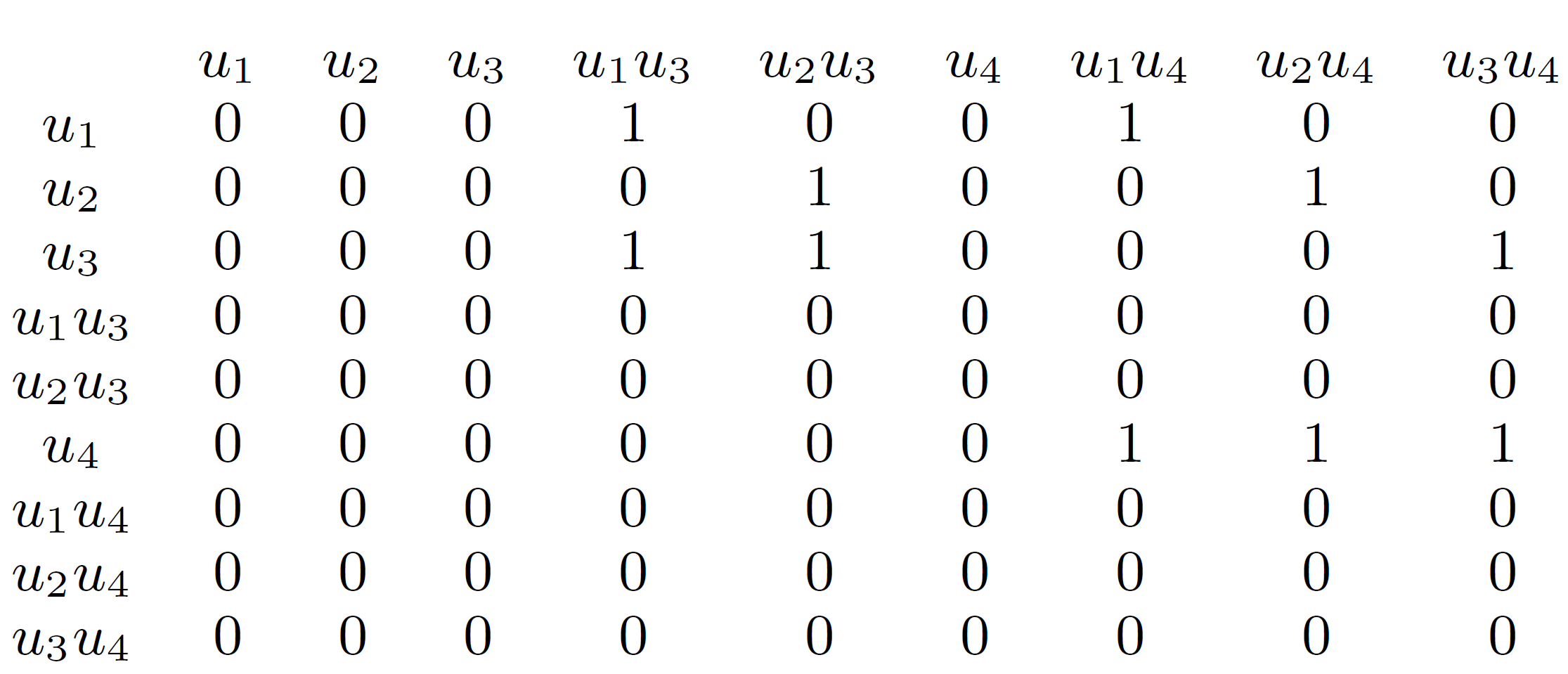}}
		\caption{The matrix $A$ of Figure~\ref{fig:motivation}}
		\label{fig:A}
	\end{center}
\end{figure}

$D$ is defined similarly, except that it encodes the relationship of the simplices in the descending sequence, i.e., $D[i,j] = 1$ iff $\lambda_i$ is the boundary of $\lambda_j$. $P$ stores the permutation that connects the two sequences of simplices, i.e., $P[i,j]=1$ iff $\kappa_i$ and $\lambda_j$ denote the same simplex. $0$ is a zero-valued $m\times m$ matrix. In the supplementary material, we will provide a complete example of matrix $M$.

The computation of persistent homology boils down to counting ranks of submatrices of the boundary matrix. This can be achieved by a specific matrix reduction algorithm on $M$. Algorithm~\ref{alg:MR} reduces $M$ from left to right. To describe the reduction process, let $low_M(j)$ be the maximum row index $i$ for which $M[i,j] = 1$. For instance, in Figure~\ref{fig:A}, $low_M(4) = 3$.
If column $j$ is zero, then $low_M(j)$ is undefined. $M$ is reduced if $low_M(j) \neq low_M(k)$ for any two non-zero columns $ j\neq k$. Notice that ``add" here is the mod-2 sum of the binary column vectors. After the matrix reduction is finished, we can record the persistence diagram. Each pair of simplex ids $(low_M(j), j)$ corresponds to an extended persistence pair. By taking the filter function value of the corresponding simplices, we will obtain the corresponding birth and death times of a persistence point. For simplicity, we slightly abuse the notation, that is to denote the birth and death time as $f(low_M(j))$ and $f(j)$ respectively. To better illustrate Algorithm~\ref{alg:MR}, we provide the reduction process of Figure~\ref{fig:motivation} in the supplementary material.

\begin{algorithm}[h]
	\caption{Matrix Reduction}
	\label{alg:MR}
	\begin{algorithmic}
		\STATE {\bfseries Input:} filter funtion $f$, graph $G$
		\STATE Persistence Diagram $PD=\{\}$
		\STATE$M=$ build reduction matrix$(f,G)$
		\FOR{$j=1$ {\bfseries to} $2m$}
		\WHILE{$\exists k < j$ with $low_M(k)=low_M(j)$}
		\STATE add column $k$ to column $j$
		\ENDWHILE
		\STATE add $(f(low_M(j)),f(j))$ to $PD$
		\ENDFOR
		\STATE {\bfseries Output:} Persistence Diagram $PD$
	\end{algorithmic}
\end{algorithm}

\subsection{TLC-GNN for Link Prediction}
\label{sec:tlcgnn}
We have introduced extended persistent homology, its computation and vectorization. Next, we explain how our method computes topological feature for a specific pair of target nodes, and combine it with GNN for link prediction.

To characterize the interaction between two target nodes, we need to choose a vicinity graph and a corresponding filter function. Given a weighted graph, we can define the sum of the distance from any node $v$ to the target nodes $v_1$ and $v_2$ as the filter function, $f(v)=d(v,v_1)+d(v,v_2)$. Note that for each pair of target nodes on which we predict whether a link exists, we create a filter function and then compute the persistent homology as the topological feature. 

Our topological feature is only extracted within vicinity of the target nodes.
Existing works using persistent homology usually exploit topological information from the whole graph \cite{zhao2019learning,hofer2020graph}, thus cannot be directly used in our task. Besides, topological structures captured by global persistent homology can be irrelevant to the target nodes if they are far away. As justified in \cite{zhang2018link}, local enclosing subgraphs already contain enough information for link prediction. We exploit the intersection of the $k$-hop neighborhoods of the two target nodes: let $G = (V,E)$ be the whole graph, where $V$ and $E$ are the set of vertices and edges. $V_1 = \{v|d(v,v_1) \leq k\}$ and $V_2 = \{v|d(v,v_2) \leq k\}$ are the k-hop neighborhoods of target nodes $v_1$ and $v_2$, $V_{12} = V_1 \cap V_2$ is the intersection. The enclosing subgraph is $G_{12} = (V_{12},E_{12})$,  where  $E_{12} = E\cap V_{12}^2$. We compute the persistence image of the subgraphs generated by all possible links, and define $PI(v_1,v_2)$ as the persistence image of the subgraph generated by target nodes $v_1$ and $v_2$.

We combine the extracted topological feature $PI(v_1,v_2)$ with GNN-generated node embedding features for link prediction.
To obtain node embeddings, we use a classic $L$-layer graph convolutional network (GCN) \cite{kipf2016semi}. Messages are passed between nodes to update their feature representations. After an $L$-layer GCN, the embedding of a certain node can be viewed as a combination of node representation from its $L$-hop neighborhood. $H^{\ell} = [h_1^{\ell}, h_2^{\ell},...,h_{|V|}^{\ell}]$ where $h^{\ell}_i \in \mathbf{R}^{d_{\ell}}$ is the representation of node $i$ in the $\ell$-th layer, $\ell=0,1,...,L$, and $|V|$ is the number of nodes. Here, $H^0$ is the input node features, and $H^{L}$ is the node embeddings of the final layer.

To compute the representation of all the nodes, in the $\ell$-th layer, $H^{\ell} = \sigma(\hat D^{-1/2}\hat A  \hat D^{-1/2} H^{\ell-1}  W^{\ell})$, where $\hat A = A + I$ denotes the adjacency matrix with inserted self-loop and $\hat D$ is a diagonal matrix with $\hat D[i,i] = \sum_j \hat A[i,j]$. $\sigma$ represents the activation function, and $W^{\ell}$ is a learned matrix to encode node embedding from dimension $d_{\ell-1}$ to $d_\ell$. The nodewise formulation of node $i$ is given by $h_i^{\ell} = \sigma(W^{\ell}\sum_{j \in N(i)} \frac{1}{\sqrt{\hat d_j \hat d_i}}h_j^{\ell-1})$ where $N(i)$ is the neighborhood of node $i$ including $i$ itself.

For given target nodes $u$ and $v$, the node embeddings $h_u$ and $h_v$ obtained through the GCN model can be viewed as the node feature, and the persistence image $PI(u,v)$ can be viewed as the edge feature. To combine the two features effectively, we use a modified Fermi-Dirac decoder \cite{krioukov2010hyperbolic, nickel2017poincare}: $(h_u-h_v)^2$ is defined as the distinction between the two nodes. It is then concatenated with $PI(u,v)$, and passed to a two layer Multi-Layer Perceptron (MLP), which outputs a single value. Denote the value after the two layer MLP as distance of the two target nodes: $dist(u,v)$, and the final probability of whether there exists an edge between $u$ and $v$ is $prob(u,v) = \frac{1}{(e^{(dist(u,v)-2)} + 1)}$. We then train TLC-GNN by minimizing the cross-entropy loss using negative sampling.

\section{A Faster Algorithm for Extended Persistence Diagram}
\label{sec:acc}
In this section, we propose a new algorithm for extended persistent homology.
Recall that Algorithm~\ref{alg:MR} reduces the matrix $M$. In the new algorithm, we manage to avoid explicit construction and reduction of the matrix $M$. Instead, we create and maintain a rooted tree while going through the descending filtration. When processing a new edge, by inspecting its relationship with the current tree, we can find the corresponding persistence pair efficiently. Afterward, we update the tree and continue with the next edge in the descending filtration. In section \ref{sec:accal}, we explain the algorithm in details.
In section \ref{sec:proof}, we prove the correctness of the proposed algorithm.

\newcommand{\T}{\mathcal{T}}

\subsection{A Faster Algorithm}
\label{sec:accal}
\begin{algorithm}[tb]
	\caption{A Faster Algorithm for Extended Persistence Diagram}
	\label{alg:Acc}
	\begin{algorithmic}
		\STATE {\bfseries Input:} filter funtion $f$ of descending filtration, graph $G$, filter function $f_a$ of ascending filtration
		\STATE 0-dim PD, $E_{pos}$, $E_{neg}$ = Union-Find($G,f$)
		\STATE Tree $\T$ = $E_{neg}$ + all the nodes
		\STATE 1-dim PD = $\{\}$
		\FOR{$e_j$ in $E_{pos}$}
		\STATE assume $e_j=uv$, $Path_u\subseteq \T$ is the path from $u$ to $r$ within the tree $\T$, $Path_v \subseteq \T$ is the path from $v$ to $r$
		\STATE $Loop = Path_u \cup Path_v -  Path_u \cap Path_v$ + $e_j$
		\STATE add $(max_{e\in Loop} f_a(e),f(e_j))$ to 1-dim PD
		\STATE $ e_k = argmax_{e\in Loop} f_a(e)$
		\STATE $\T = \T - \{e_k\} + \{e_j\}$
		\ENDFOR
		\STATE {\bfseries Output:} 0-dim PD, 1-dim PD 
	\end{algorithmic}
\end{algorithm}

\begin{figure}[ht]
	\begin{center}
		\centerline{\includegraphics[width=\columnwidth]{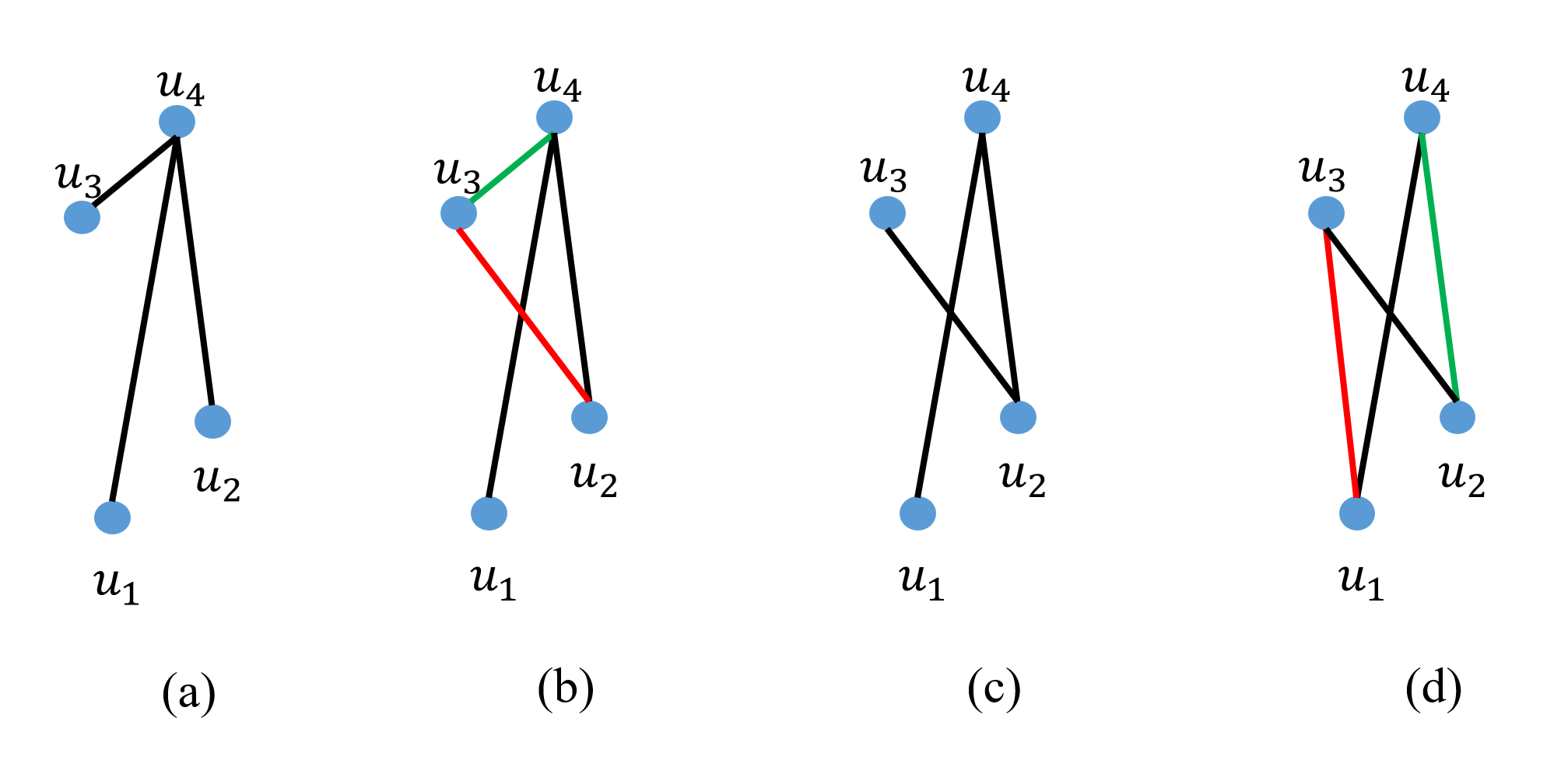}}
		\caption{An illustration of Algorithm~\ref{alg:Acc}, based on Figure~\ref{fig:motivation}.}
		\label{fig:acc_example}
	\end{center}
\end{figure}

Our new algorithm is shown in Algorithm~\ref{alg:Acc}. For 0-dim topology, we run existing union-find algorithm \cite{edelsbrunner2010computational} for the ascending filtration once and for the descending filtration once. The algorithm is guaranteed to be correct. In the following section, we will therefore mainly focus on 1-dimensional extended persistence.

Take Figure \ref{fig:acc_example} as an example, with the Union-Find algorithm \cite{edelsbrunner2000topological,cohen2006vines}, we can procure the set of positive edges and the set of negative edges in the descending filtration: $E_{pos} = \{u_2u_3, u_1u_3\}$, $E_{neg} = \{u_3u_4,u_2u_4,u_1u_4\}$. Recall that positive edges give rise to new loops, while negative edges do not. And in the descending filtration, every positive edge will be paired with a certain edge in the ascending filtration, and give rise to a certain loop. 

In order to find all the persistence pairs, we construct a tree with all the negative edges and nodes. Every time we add a positive edge into the tree, a loop appears. The newly added edge and the edge which appears the latest in the ascending filtration in the loop form the persistence pair. We then delete the paired edge to update the tree iteratively. Updating the tree is essential. When a new edge $e$ is added into the descending filtration, it forms a unique loop with the tree; this loop is exactly the loop with the latest birth during ascending and is coned out due to $e$. Without tree updating, the loop formed by the tree and $e$ will not be the desired one.

Look at Figure~\ref{fig:acc_example}, all the negative edges and nodes form (a). The positive edge $u_2u_3$ is then added, and we can discover the loop it forms: $\{u_3u_4,u_4u_2,u_2u_3\}$. In the loop, the edge that appears the latest in the ascending filtration is $u_3u_4$. The sequence of simplices in the ascending filtration is shown in Figure~\ref{fig:A}. Then $u_2u_3$ is paired with $u_3u_4$, we can obtain the persistence point $(t_4,t_2)$. 

After that, we delete the paired edge $u_3u_4$ to get (c). Then the positive edge $u_1u_3$ is added, with the same process, we can discover the loop: $\{u_1u_3,u_3u_2,u_2u_4,u_4u_1\}$, and the paired edge is $u_2u_4$. The persistence point of the loop is $(t_4,t_1)$, so the final 1-dimensional extended persistence diagram is $\{(t_4,t_2),(t_4,t_1)\}$.


\myparagraph{Complexity of Algorithm~\ref{alg:Acc}.} For every positive edge, the time cost to get the loop and to form the new path are both $O(|V|)$, here $|V|$ denotes the number of nodes, and $|E|$ represents the number of edges. So the time cost among all positive edges is $(|E|-|V|+1)*O(|V|) = O(|V||E|)$. 
The union-find algorithm will take $O(|V|+|E|)log(|E|+|V|)$. 
Therefore the final computational cost is $O(|V||E|)$. This is much more efficient than the complexity of the matrix reduction algorithm, $O((|V|+|E|)^3)$.

\subsection{The Correctness of Algorithm \ref{alg:Acc}}
\label{sec:proof}

\newcommand{\redM}{\overline{M}}
\newcommand{\redD}{\overline{D}}
\newcommand{\redP}{\overline{P}}
\newcommand{\redA}{\overline{A}}

In this section, we prove the proposed faster algorithm is correct, i.e., it produces the same output as the matrix reduction algorithm. Formally, we state the theorem as follows.
\begin{theorem}
	\label{theorem}
	Algorithm~\ref{alg:Acc} outputs the same extended persistence diagram as Algorithm~\ref{alg:MR}.
\end{theorem}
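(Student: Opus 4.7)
The plan is to decompose the theorem into the 0-dimensional and 1-dimensional parts. The 0-dimensional correctness follows immediately: Algorithm~\ref{alg:Acc} invokes the standard union-find routine on the ascending and descending filtrations separately, and it is well-known that union-find (via the elder rule) yields exactly the same 0-dimensional persistence pairs as the matrix reduction of Algorithm~\ref{alg:MR}. So the real content is the 1-dimensional part, where I must show that for each positive edge $e_j$ of the descending filtration, the partner $e_k$ produced by Algorithm~\ref{alg:Acc} equals the one matrix reduction would yield.

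The first substantive step is a structural lemma characterizing the 1-dimensional output of Algorithm~\ref{alg:MR} combinatorially. Interpreting the reduction of the lower-right block of $M$ over $\mathbb{F}_2$ as performing column operations in the cycle space, I would argue that when column $e_j$ of the descending block is finally reduced, its pivot entry lies at the row indexing the unique ascending edge $e_k$ that attains $\max_{e \in C_j} f_a(e)$ along a certain cycle $C_j$. Here $C_j$ is the cycle built from $e_j$ together with descending edges already processed (in decreasing filter order) whose columns contribute to $e_j$'s reduction. Uniqueness of the pivot (and hence of the pair) follows from the standard pivot-uniqueness property of matrix reduction. The cycle $C_j$ can equivalently be described recursively: it is the unique cycle in $\T^\star_j \cup \{e_j\}$, where $\T^\star_j$ is a spanning tree obtained from the initial negative-edge tree $E_{\mathrm{neg}}$ by iteratively swapping out each paired ascending edge $e_{k_i}$ in favour of the corresponding $e_{j_i}$, for all $i < $ index of $j$.

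The second step is an induction on the order in which positive descending edges are processed, whose invariant is exactly $\T = \T^\star_{j_i}$ at the start of the $i$th iteration. The base case is immediate, since Algorithm~\ref{alg:Acc} initializes $\T$ with the negative edges of the descending filtration. For the inductive step, $\T$ is a spanning tree (by the matroid exchange property: removing a tree edge on a fundamental cycle and inserting the non-tree edge closing that cycle again yields a spanning tree), so $\T \cup \{e_j\}$ contains a unique cycle, which equals $C_j$ by the characterization in the previous step; the edge $e_k = \arg\max_{e \in C_j} f_a(e)$ selected by the algorithm therefore coincides with matrix reduction's pair, and the swap $\T \leftarrow \T - \{e_k\} + \{e_j\}$ maintains the invariant for iteration $i+1$.

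The main obstacle is the structural lemma of step two: translating the $\mathbb{F}_2$ column arithmetic of matrix reduction into the combinatorial language of cycles on $\T^\star_j$ and proving that the pivot after reduction corresponds to the $f_a$-maximizer along this cycle. The key tool I would use is that each column in the descending block represents (up to boundary) a 1-cycle in the graph, column addition corresponds to symmetric difference of cycles, and reducing against all earlier columns leaves a cycle representative whose oldest (in $f_a$) edge is forced by the elder rule for extended persistence. Once this lemma is in hand, the inductive matching with Algorithm~\ref{alg:Acc} is essentially bookkeeping.
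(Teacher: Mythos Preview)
Your proposal is correct and follows essentially the same route as the paper: both handle the 0-dimensional case by appealing to union-find, and for the 1-dimensional case both prove by induction over the positive descending edges that (i) the column of $e_j$ after reducing its $D$-part encodes exactly the unique cycle in $\T \cup \{e_j\}$, (ii) the pivot is the $f_a$-maximal edge on that cycle, and (iii) the tree swap $\T \leftarrow \T - \{e_k\} + \{e_j\}$ is equivalent to a column operation that preserves all subsequent pairings. The only cosmetic difference is packaging: you phrase things in terms of the cycle space over $\mathbb{F}_2$, matroid exchange, and an invariant $\T = \T^\star_{j_i}$, whereas the paper works explicitly through the three phases of the reduction and states the equivalence of the tree swap with ``adding the reduced column of $e_j$ to all descending columns with nonzero entry at row $e_k$'' as a separate lemma.
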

We briefly explain the idea of the proof. A complete proof is provided in the supplementary material. 

The 0-dimensional persistent homology is computed using known Union-Find algorithm. We only need to prove the correctness for the 1-dimensional topology, i.e., loops. Recall each loop is created by an edge in the ascending filtration and destroyed by an edge in the descending filtration. We plan to show that the pairs of the edges found by our new algorithm are the same as the pairs found by the matrix reduction algorithm. Denote by $\redM=\left[\begin{matrix} \redA & \redP \\ 0 & \redD\end{matrix} \right] $ the reduced matrix of $M$ and its submatrices. 

We classify edges in the descending filtration into negative descending edges and positive descending edges. A negative descending edge $e_i$ destroys a connected component created by a descending node $v_j$. In the reduced matrix $\redM$, its lowest entry $[v_j,e_i]$ falls into $\redD$. A positive descending edge $e_i$ adds a new loop to the superlevel set. But for extended persistence, it destroys the loop created by an edge $e_j$ in the ascending filtration. The lowest entry of $e_i$, $[e_j, e_i]$ falls into $\redP$. See Figure \ref{fig:motivation} (c) for an illustration of different types of simplex pairs and their corresponding  
persistence points in the extended diagram.

The core of our proof is to show that for each positive descending edge, its corresponding pair found by the new algorithm is equivalent to the pair in the reduced algorithm. We prove this by induction. As we go through all positive descending edges in the descending filtration, we show that for edge $e_i$, the pairing by our algorithm is the same as the reduction result. We prove that:

\begin{itemize}[topsep=0pt, partopsep=0pt,itemsep=0pt,parsep=0pt]
	\item After reducing the $D$ part of the column of $e_i$, i.e., when $low(e_i)\leq m$, the remaining entries of $e_i$ in $P$ constitute a unique loop in $\{e_i\}\cup \T_{i-1}$. Here $\T_i$ is the tree after updating the first $i$ positive edges.
	\item The lowest entry of the reduced column, $e_j$, is not paired by any other previous edges, and thus will be paired with $e_i$. Indeed, it is the last edge in the loop w.r.t.~the ascending ordering.
	\item The updating of the tree $\T_i = \T_{i-1} - \{e_j\} + \{e_i\}$ is equivalent to adding the reduced column of $e_i$ to all descending columns with nonzero entry at row of $e_j$. Although this will affect the final reduced matrix, it will not change the resulting simplex pairings.
\end{itemize}

This proves that our new algorithm has the same output as the existing matrix reduction algorithm. We have shown that the new algorithm is much more efficient in terms of complexity (Section~\ref{sec:accal}). We will also validate the benefit empirically in Section~\ref{sec:exp}.

\section{Experiments}
\label{sec:exp}
\begin{table*}[t]
	\caption{Mean and standard deviation of ROC-AUC  on real-world data. ``*'': results copied from \cite{chami2019hyperbolic,zhu2020graph}.} \label{tab:result}
	\vskip 0.15in
	\begin{center}
		\begin{small}
			\begin{sc}
				\begin{tabular}{lllll}
					\hline\noalign{\smallskip}
					Method &  PubMed & Photo & Computers \\
					\noalign{\smallskip}\hline\noalign{\smallskip}
					GCN \cite{kipf2016semi} & 89.56$\pm$3.660* & 91.82$\pm$0.000 & 87.75$\pm$0.000\\
					HGCN \cite{chami2019hyperbolic}  & 96.30$\pm$0.000* & 95.40$\pm$0.000 & 93.61$\pm$0.000  \\
					GIL \cite{zhu2020graph}  &  95.49$\pm$0.160* & 97.11$\pm$0.007 & 95.89$\pm$0.010  \\
					SEAL \cite{zhang2018link} & 92.42$\pm$0.119 & 97.83$\pm$0.013 & 96.75$\pm$0.015 \\
					\noalign{\smallskip}
					\hline
					\noalign{\smallskip}
					PEGN \cite{zhao2020persistence} & 95.82$\pm$0.001  & 96.89$\pm$0.001 & 95.99$\pm$0.001\\
					TLC-GNN (nodewise) & 96.91$\pm$0.002 & 97.91$\pm$0.001 & 97.03$\pm$0.001\\
					\noalign{\smallskip}
					\hline
					\noalign{\smallskip}
					TLC-GNN (DRNL) & 96.89$\pm$0.002  & 97.61$\pm$0.003 & 97.23$\pm$0.003  \\
					TLC-GNN (Ricci) & \textbf{97.03$\pm$0.001} & \textbf{98.23$\pm$0.001} & \textbf{97.90$\pm$0.001}\\
					\noalign{\smallskip}\hline
				\end{tabular}
			\end{sc}
		\end{small}
	\end{center}
	\vskip -0.1in
\end{table*}

We evaluate our method on on synthetic and real-world graph datasets. We compare with different SOTA link prediction baselines. Furthermore, we evaluate the efficiency of the proposed faster algorithm for extended persistent homology. Source code will be available at \href{https://github.com/pkuyzy/TLC-GNN}{https://github.com/pkuyzy/TLC-GNN}.

\myparagraph{Baseline methods.}
We compare our framework TLC-GNN with different link prediction methods. We compare with popular GNN models such as \textbf{GCN} \cite{kipf2016semi} and \textbf{GAT} \cite{velivckovic2017graph}.
We use them for node embedding and then adopt the Fermi-Dirac decoder \cite{krioukov2010hyperbolic, nickel2017poincare} to predict whether there is a link between two nodes. 
We also compare with several SOTA methods. \textbf{SEAL} \cite{zhang2018link}, which utilize local distances and pre-trained transductive node features for link
prediction.
\textbf{HGCN} \cite{chami2019hyperbolic}, which introduces hyperbolic space to deal with free-scale graphs. 
\textbf{GIL} \cite{zhu2020graph}, which takes advantage of both Euclidean and hyperbolic geometries.

To demonstrate the importance of \emph{pairwise} topological features, we also compare with two baseline methods using node-wise topological features, i.e., topological feature for each node.  \textbf{PEGN} \cite{zhao2020persistence} extracts topological features for each node using its neighborhood. Then the node-wise topological feature is used to re-calibrate the graph convolution. PEGN was originally designed for node classification. Similar to GCN and GAT, we adapt it to link prediction via the Fermi-Dirac decoder. A second baseline, \textbf{TLC-GNN (Nodewise)}, is a modification of our method TLC-GNN (Ricci). Instead of the pairwise topological feature for a pair of target nodes, we concatenate their node-wise topological features, inject it into the node embedding, and then use MLP to predict. 

For all settings, we randomly split edges into 85/5/10\% for training, validation, and test sets. To compute the distance-based filter function, we need a graph metric. We use hop-distance and Ollivier-Ricci curvature \cite{ni2018network} respectively. The filter function with hop-distance is the same as Double-Radius Node Labeling (DRNL) \cite{zhang2018link}. For Ollivier-Ricci curvature, we compute the curvature for all training edges and use them as the edge weights. We add 1 to the curvature of all edges to avoid negative edge weights. With this weighted graph, we define the filter function of each node as the total shortest distance from the two target nodes. In each subgraph, the corresponding target nodes $(v_1, v_2)$ are used as the anchors to compute the filter function: for given node $v$, $f(v) = d(v, v_1) + d(v, v_2)$, where $d(v, v_1)$ is the shortest path distance between the given node $v$ and the corresponding target node $v_1$ in the weighted graph.
Recall that we need to use the intersection of the $k$-hop neighborhoods of the two target nodes to compute pairwise topological features. The choice of $k$ is either 1 or 2, depending on the size and density of the graph.

Note that here we use pre-computed weight function to define filter function. It is possible to extend the framework to learn the filter function end-to-end, as in graph classification \cite{hofer2020graph}. However, this involves recomputing filter function and persistence diagram for each pair of target nodes for each epoch; it is computationally prohibitive. Alternatively, we may learn better kernels on the persistence diagrams, which do not require recomputing persistence diagrams every epoch \cite{zhao2019learning}.

\subsection{Synthetic Experiments}

We generate synthetic data using a graph theoretical model called Stochastic Block Model (SBM) \cite{holland1983stochastic}. 
To be specific, we create random graphs with 1000 nodes, forming 5 equal-size communities. Edges are randomly sampled with intra-community probability $p$ and inter-community probability $q$. We randomly create 12 graphs with $p$ ranging in $\{0.05,0.25,0.45\}$  and $q$ ranging in $\{0.0, 0.015, 0.03, 0.045\}$. Furthermore, we assign each node with a randomly created feature of dimension 100 and use them as the input of all the methods. For pairwise persistence diagrams, we use intersections of 1-hop neighborhoods for all the graphs.

We run the methods on each synthetic graph 10 times and report the mean average area under the ROC curve (ROC-AUC) score as the result. More details can be found in the supplementary material. 

\begin{figure}[btp!]
	\begin{center}
		\centerline{\includegraphics[width=\columnwidth]{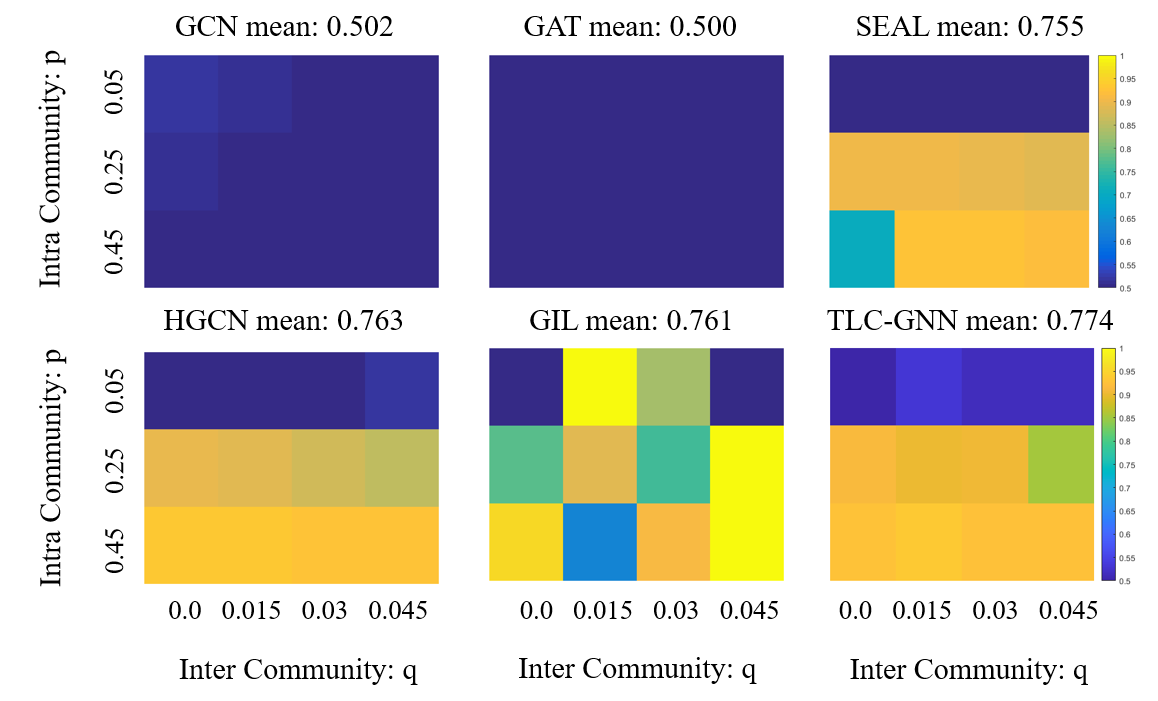}}
		\caption{Heatmap of ROC-AUC score for different methods on synthetic data. For each method we generate a heatmap for its performance on 12 synthetic graphs with different $(p,q)$ combinations. We also report in the title of each heatmap the average performance over 12 graphs.}
		\label{fig:exp_syn}
	\end{center}
\end{figure}

\myparagraph{Results.} Figure~\ref{fig:exp_syn} reports the results of 6 methods. We observe that TLC-GNN outperforms nearly all the SOTA baselines among all the synthetic graphs. This justifies the benefit of pairwise topological feature. We observe that GIL sometimes performs well, but is unstable overall.

\subsection{Real-World Benchmarks}
We use a variety of datasets: (a) PubMed \cite{sen2008collective} is a standard benchmark describing citation network. (b) Photo and Computers \cite{shchur2018pitfalls} are graphs related to Amazon shopping records. (c) PPI networks are protein-protein interaction networks\cite{zitnik2017predicting}. More details of these datasts can be found in the supplementary material.

\begin{table}[bt!]
	\caption{Computational time (seconds per edge) evaluation. }
	
	\label{tab:efficiency}
	\vspace{-.16in}
	\vskip 0.2 in
	\begin{center}
		\begin{small}
			\begin{sc}
				\scalebox{1.0}{
					\begin{tabular}{lccc}
						\hline\noalign{\smallskip}
						& PubMed & Photo & Computers \\
						\noalign{\smallskip}\hline\noalign{\smallskip}
						Alg.~\ref{alg:MR} & 0.0068 & 1.6557 & 4.6531 \\
						Alg.~\ref{alg:Acc} & 0.0027 & 1.1176 & 2.7033  \\
						\noalign{\smallskip}\hline
					\end{tabular}
				}
			\end{sc}
		\end{small}
	\end{center}
	\vskip -0.1in
\end{table}

For pairwise persistence diagrams, we compute intersections of 1-hop neighborhoods for PPI, Photo and Computers due to their high density. We use intersections of 2-hop neighborhoods for PubMed. Following prior works \cite{chami2019hyperbolic,zhu2020graph}, we evaluate link prediction using the ROC-AUC score on the test set. We run the methods on each graph 50 times to get the results.

\begin{table}[t]
	\centering
	\caption{Experimental results(s) on PPI datasets}
	\label{tab:ppi}
	\begin{sc}
		\scalebox{0.8}{
			\begin{tabular}{lccccc}
				\hline\noalign{\smallskip}
				Samples & 1 & 2 & 3 & 4 & 5 \\
				\noalign{\smallskip}\hline\noalign{\smallskip}
				GCN & 75.21 & 74.42 & 77.68 & 76.22 & 69,67\\
				GIL & 57.69 & 1.45 & 34.90 & \textbf{85.61} & 33.65 \\
				HGCN &  &cannot & converge & \\
				SEAL & 50.00 & 64.79  & 67.14 & 72.55 & 50.00\\
				TLC-GNN & \textbf{83.92} & \textbf{81.21}  & \textbf{83.95} & 83.03 & \textbf{83.53} \\
				\noalign{\smallskip}\hline
		\end{tabular}}
	\end{sc}
	\vspace{-.2in}
\end{table}

\myparagraph{Results.} Table~\ref{tab:result} summarizes the performance of all methods. TLC-GNN is superior to others among all the benchmarks. This implies that the high-order topological feature is more effective in these large and dense graphs, which tend to have rich and heterogeneous structures. 

For PPI networks, we run experiments on 5 sample graphs. Ollivier-Ricci curvature is adopted as the filter function. The results are shown in Table~\ref{tab:ppi}. We observe that TLC-GNN consistently outperforms nearly all the SOTA baselines among all the sampled PPI graphs. Although GIL sometimes performs well, it is unstable overall. This further justifies the benefit of pairwise topological feature. 

\textbf{Ablation study.} Comparing with models using nodewise persistent homology (PEGN and TLC-GNN (Nodewise)), TLC-GNN generally performs better. 
This proves that for link prediction task, the proposed pairwise persistent homology carries crucial structural information to model node interaction. Whereas nodewise topological feature cannot.

\subsection{Algorithm Efficiency}
To verify the computational benefit of the proposed new algorithm (Algorithm \ref{alg:Acc}), we compare it with the classic matrix reduction algorithm (Algorithm \ref{alg:MR}) (implemented in the \emph{Dionysus} package \cite{dmitriydionysus}). Both implementations are written in python\footnote{The codes of Dionysus are transformed into Cython, a C-Extension for Python.}. We compare the two algorithms on all real-world benchmarks. We report the average running time for computing the pairwise topological feature for each edge (in seconds). More details are available in the supplementary material.

\myparagraph{Results.} As is shown in Table~\ref{tab:efficiency}, the proposed Algorithm~\ref{alg:Acc} achieves 1.5 to 2.5 times speedup compared with the matrix reduction algorithm (Algorithm~\ref{alg:MR}) on all benchmarks.

\section{Conclusion}
In this paper, we propose a novel pairwise topological feature for link prediction, based on the theory of persistent homology. We also introduce a GNN model to leverage this topological feature. Experiments show that our approach outperforms state-of-the-arts on various benchmarks, especially on large and dense graphs. Besides, we propose a novel algorithm to more efficiently calculate the extended persistence diagrams on graphs. We verify the correctness and efficiency of the algorithm. The algorithm can be generalized to other graph learning tasks.

\bibliography{example_paper}
\bibliographystyle{icml2021}

\twocolumn[





\vskip 0.3in
]




\section*{Appendix}

In this appendix, we provide (1) additional details for persistence image; (2) additional examples to illustrate the matrix reduction algorithm for extended persistent homology; (3) a complete proof of the correctness of the proposed faster algorithm; and (4) additional experimental details and qualitative results.
\section*{A.1 Persistence Image}

In recent years, efforts have been made to map persistence diagrams into representations valuable to machine learning tasks. Persistence Image \cite{adams2017persistence} is one such approach to convert persistence diagrams to vectors, which will be used in our model. Let $T:\mathbf{R}^2 \rightarrow \mathbf{R}^2$ be a linear transformation $T(x,y)=(x,y-x)$ of persistence points. Given a persistence diagram $D$, $T(D) = \{T(d)|d \in D\}$ is the transformed diagram. For any $z \in \mathbf{R}^2$, $\phi_u(z)=\frac{1}{2\pi\sigma^2}e^{-\frac{||z-u||^2}{2\sigma^2}}$ is the 2D Gaussian function with mean $u$ and standard deviation $\sigma$.

Let $\alpha: \mathbf{R}^2 \rightarrow \mathbf{R}$ be a non-negative weight function for the persistence plane $\mathbf{R}^2$. Given a persistence diagram $DgX$, its persistence surface is defined as: $\rho_D(z) = \sum_{u\in T(D)} \alpha(u)\phi_u(z)$. Fix a grid in the plane with $n$ pixels, the persistence image is the collection of pixels $PI_D = \{PI_D[p]\} \in \mathbf{R}^n$ where $PI_D[p] = \int \int_p \rho_{D}(x,y)dxdy$, thus can be directly used in machine learning tasks. The stability of persistence image under perturbation has been proven in \cite{adams2017persistence}. In our setting, $\alpha$ is a piecewise linear weighting function: $$\alpha(x,y)=
\begin{cases}
0 & \text{if } y \leq 0 \\
y & \text{if } 0 < y \leq 1 \\
1 & \text{if } y > 1
\end{cases}.$$

\begin{figure*}[btp]
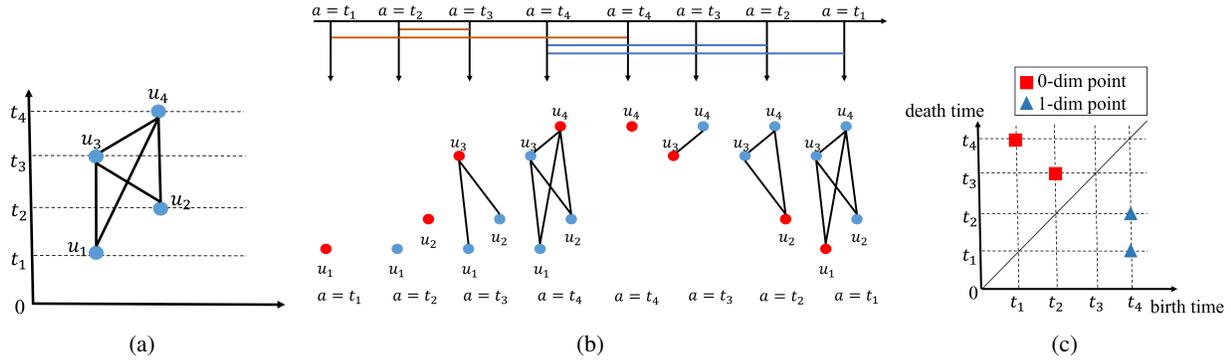

\centering
\subfigure[]{
\begin{minipage}[t]{0.22\linewidth}
\centering
\includegraphics[width=\columnwidth]{PD_ex}
\end{minipage}
}%
\subfigure[]{
\begin{minipage}[t]{0.46\linewidth}
\centering
\includegraphics[width=\columnwidth]{PD}
\end{minipage}%
}%
\subfigure[]{
\begin{minipage}[t]{0.26\linewidth}
\centering
\includegraphics[width=\columnwidth]{PD_diagram}
\end{minipage}
}%
\centering
\vspace{-.1in}
\caption{An illustration of extended persistent homology. (a) We plot the input graph with a given filter function. The filter value for each node is $f(u_1)=t_1$, $f(u_2)=t_2$, $f(u_3)=t_3$, $f(u_4)=t_4$. (b) The ascending and descending filtrations of the input graph. The bars of brown and blue colors correspond to the life spans of connected components and loops respectively. The first four figures are the ascending filtration, while the last four figures denote the descending filtration. In the ascending filtration, $f(uv) = max(f(u),f(v))$, while in the descending filtration, $f(uv) = min(f(u),f(v))$. (c) In the resulting extended persistence diagram, red and blue markers correspond to 0-dimensional and 1-dimensional topological structures. There are two blue markers, corresponding to two loops $(u_1u_3,u_3u_4,u_4u_1)$, $(u_2u_3,u_3u_4,u_4u_2)$. The range of filter function $f$ for these two loops are $[t_1,t_4]$, $[t_2,t_4]$ respectively. These ranges are encoded as the coordinates of the blue markers.}
\label{fig:motivation_appendix}
\vspace{-.15in}
\end{figure*}

\begin{figure*}[btp]
\centering
\subfigure[]{
\begin{minipage}[t]{0.50\linewidth}
\centering
\includegraphics[width=\columnwidth]{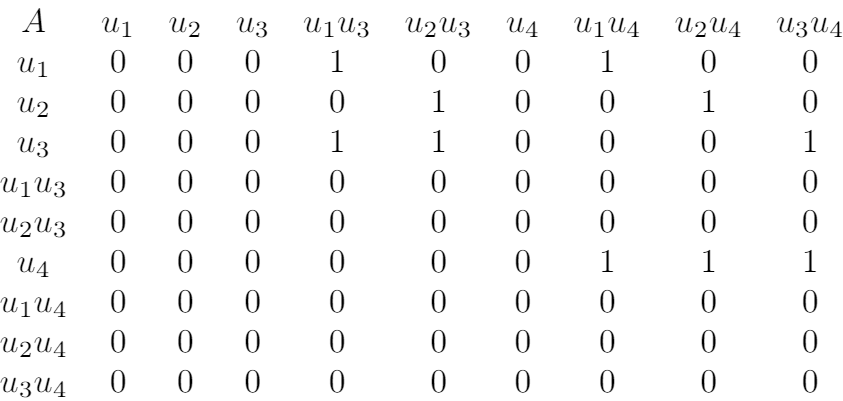}
\end{minipage}
}%
\subfigure[]{
\begin{minipage}[t]{0.50\linewidth}
\centering
\includegraphics[width=\columnwidth]{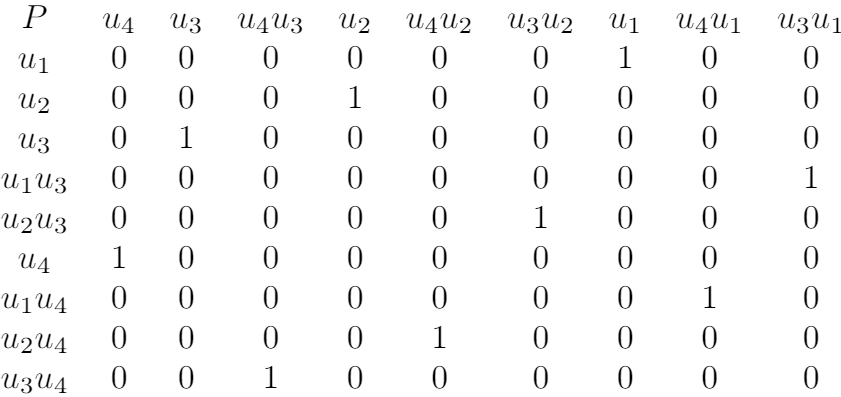}
\end{minipage}%
}%
\\
\subfigure[]{
\begin{minipage}[t]{0.50\linewidth}
\centering
\includegraphics[width=\columnwidth]{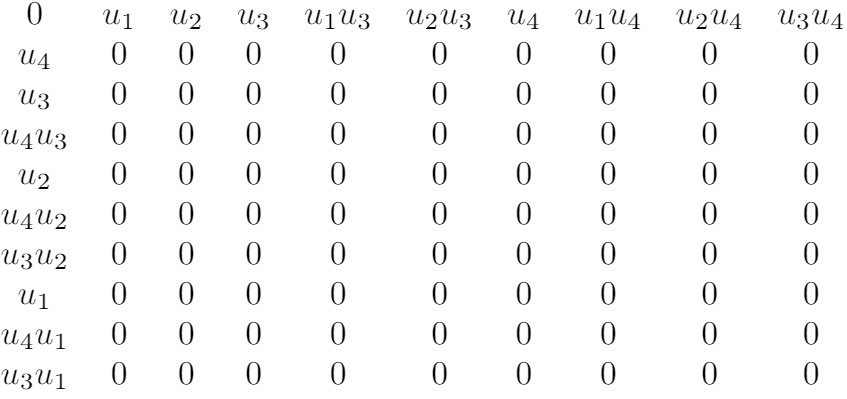}
\end{minipage}
}%
\subfigure[]{
\begin{minipage}[t]{0.50\linewidth}
\centering
\includegraphics[width=\columnwidth]{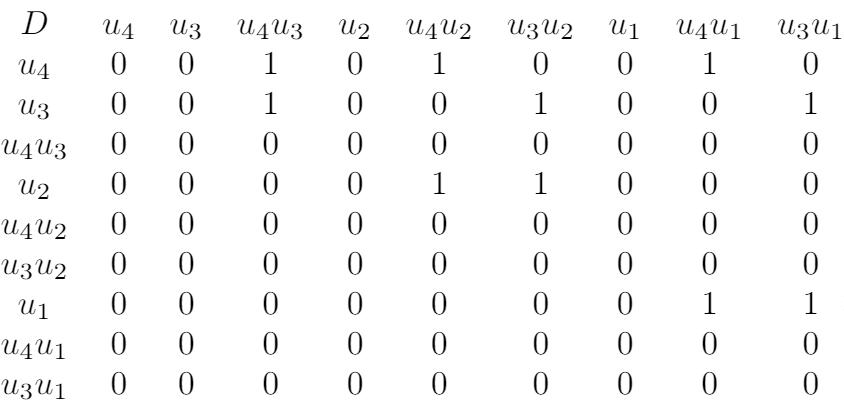}
\end{minipage}
}%
\centering
\vspace{-.1in}
\caption{Reduction matrix $M$ for Figure~\ref{fig:motivation_appendix}}
\label{fig:M}
\vspace{-.15in}
\end{figure*}

\section*{A.2 Examples of the Matrix Reduction Algorithm}
The reduction matrix $M$ for Figure~\ref{fig:motivation_appendix} is shown in Figure~\ref{fig:M}. Recall that $M$ is a binary valued matrix to encode the adjacency relationship between nodes and edges. $M$ is a $2m\times 2m$ matrix consisting of four $m\times m$ matrices: $M=\left[\begin{matrix} A & P \\ 0 & D\end{matrix} \right] $. Every column or row of $M$ corresponds to a simplex. In particular, the first $m$ columns of $M$ correspond to the ascending sequence of simplices $\kappa_1,...,\kappa_m$. The last $m$ columns of $M$ correspond to the descending sequence of simplices $\lambda_1,...,\lambda_m$. The setting is the same for the rows of $M$. Matrix $A$ encodes the relationship between all the simplices in the ascending sequence. Similar to the incidence matrix of a graph, $A[i,j]=1$ iff $\kappa_i$ is the boundary of $\kappa_j$, i.e., $\kappa_i$ is a node adjacent to the edge $\kappa_j$. 

$D$ is defined similarly, except that it encodes the relationship of the simplices in the descending sequence, i.e., $D[i,j] = 1$ iff $\lambda_i$ is the boundary of $\lambda_j$. $P$ stores the permutation that connects the two sequences of simplices, i.e., $P[i,j]=1$ iff $\kappa_i$ and $\lambda_j$ denote the same simplex. $0$ is a zero-valued $m\times m$ matrix. 

Algorithm~\ref{alg:MR} reduces the columns of matrix $M$ from left to right. If we allow certain flexibility in the reduction ordering, we can separate the algorithm into 3 phases: the reduction of matrix $A$ (Phase 1), the reduction of matrix $D$ (Phase 2) and the reduction of matrix $P$ (Phase 3) \cite{cohen2009extending}.
We define a simplex as positive if its corresponding column is zero after reduction, and negative if its corresponding column is not zero after reduction. In our setting, all the nodes, as well as edges that give rise to a loop are positive simplices, edges that destroy a connected component are negative simplices. All the simplices are either negative or positive \cite{edelsbrunner2000topological,edelsbrunner2010computational}. Notice that the positive and negative edges in the ascending filtration are not the same as the positive and negative edges in the descending filtration.

In the following paragraph, we will introduce the whole process of the matrix reduction algorithm by introducing the 3 phases successively.

\subsection*{A.2.1 Phase 1}
Phase 1 is the matrix reduction for $A$. All the columns of nodes and all the rows of edges are all zero in $A$, therefore they will have no impact on the matrix reduction algorithm. After deleting these rows and columns, matrix $A$ is shown below:
\begin{center}
	$\begin{matrix} 
	A &  u_1u_3 & u_2u_3 & u_1u_4 & u_2u_4 & u_3u_4 \\ 
	u_1 & 1 & 0 & 1 & 0 & 0  \\ 
	u_2 & 0 & 1 & 0 & 1 & 0 \\ 
	u_3 & 1 & 1 & 0 & 0 & 1 \\ 
	u_4 & 0 & 0 & 1 & 1 & 1 \\
	\end{matrix} $ \\
\end{center}
For simplicity, we define $low_A(e_i)$ as the maximum row of node $v_j$ for which $A[v_j, e_i] = 1$. Notice that $low_A$ is originally defined for the row index, and we replace the index with the simplex it represents. i.e., we replace $A[1,1]=1$ with $A[u_1,u_1u_3] = 1$, and we replace $low_A(1) = 3$ with $low_A(u_1u_3) = u_3$.

From left to right, we can find that $low_A(u_1u_3) = u_3$, and $low_A(u_2u_3) = u_3 = low_A(u_1u_3)$, thus we add column of $u_1u_3$ to column of $u_2u_3$: $[1,0,1,0] + [0,1,1,0] = [1,1,0,0]$. Notice that ``add" here means the mod-2 sum of the two binary vectors. Thus $low_A(u_2u_3) = u_2$.

Then we can find that $low_A(u_1u_4) = u_4$, $low_A(u_2u_4) = u_4 = low_A(u_1u_4)$, we add the column of $u_1u_4$ to the column of $u_2u_4$: $[1,0,0,1] + [0,1,0,1] = [1,1,0,0]$, thus $low_A(u_2u_4) = u_2 = low_A(u_2u_3)$. Again we add the column of $u_2u_3$ to the column of $u_2u_4$: $[1,1,0,0] + [1,1,0,0] = [0,0,0,0]$. Notice here the column value for $u_2u_3$ is the value after matrix reduction. Therefore $u_2u_4$ is not paired.

Similarly, we add the column of $u_1u_4$ and column of $u_1u_3$ to column of $u_3u_4$ and get $[0,0,0,0]$. Then $u_3u_4$ is not paired. After Phase 1, matrix $A$ is shown below. And we can obtain the persistence pair: $(u_3,u_1u_3)$, $(u_2,u_2u_3)$, $(u_4,u_1u_4)$.
\begin{center}
	$\begin{matrix} 
	A &  u_1u_3 & u_2u_3 & u_1u_4 & u_2u_4 & u_3u_4 \\ 
	u_1 & 1 & 1 & 1 & 0 & 0  \\ 
	u_2 & 0 & 1 & 0 & 0 & 0 \\ 
	u_3 & 1 & 0 & 0 & 0 & 0 \\ 
	u_4 & 0 & 0 & 1 & 0 & 0 \\
	\end{matrix} $ \\
\end{center}

Recall that $A$ encodes the relationship between all the simplices in the ascending sequence. In the ascending sequence, the filter function for an edge $u_iu_j$ is defined as $f_a(u_iu_j) = max(f(u_i),f(u_j))$. Thus we can infer the persistence points from the persistence pairs: $(t_3,t_3)$, $(t_2,t_3)$, $(t_4,t_4)$. We remove the persistence points whose birth time and death time are the same, and get the final persistence point: $(t_2,t_3)$.

Recall that a simplex is defined as positive if its corresponding column is zero after reduction, and negative if its corresponding column is not zero after reduction. So the positive edges in the ascending sequence are $u_2u_4$ and $u_3u_4$, the negative edges in the ascending sequence are $u_1u_3$, $u_2u_3$, and $u_1u_4$. 

\subsection*{A.2.2 Phase 2}
Phase 2 is the matrix reduction for $D$. Notice that we define $low_D$ similar to $low_A$, and Phase 2 influences not only $D$ but also $P$ \cite{cohen2009extending}. All the columns of nodes and all the rows of edges are all zero in $D$. 
In $P$, all the columns of nodes and all the rows of nodes have no impact to the reduction process and pairing for 1-dimensional topology. Therefore for simplicity, we delete these rows and edges and obtain the condensed matrices $P$ and $D$:
\begin{center}
	$\begin{matrix} 
	P & u_4u_3 & u_4u_2 & u_3u_2 & u_4u_1 & u_3u_1  \\ 
	u_1u_3 & 0 & 0 & 0 & 0 & 1 \\ 
	u_2u_3 & 0 & 0 & 1 & 0 & 0\\
	u_1u_4 & 0 & 0 & 0 & 1 & 0 \\
	u_2u_4 & 0 & 1 & 0 & 0 & 0 \\
	u_3u_4 & 1 & 0 & 0 & 0 & 0 \\
	\end{matrix} $ \\
	
	$\begin{matrix} 
	D & u_4u_3 & u_4u_2 & u_3u_2 & u_4u_1 & u_3u_1 \\  
	u_4 & 1 & 1 & 0 & 1 & 0 \\ 
	u_3 & 1 & 0 & 1 & 0 & 1 \\ 
	u_2 & 0 & 1 & 1 & 0 & 0 \\ 
	u_1 & 0 & 0 & 0 & 1 & 1 \\
	\end{matrix} $ \\
\end{center}

From left to right, $low_D(u_4u_3) = u_3$, $low_D(u_4u_2) = u_2$, $low_D(u_3u_2) = u_2 = low_D(u_4u_2)$. We add the column of $u_4u_2$ to the column of $u_3u_2$: $[1,0,1,0] + [0,1,1,0] = [1,1,0,0]$. Then $low_D(u_3u_2) = u_3 = low_D(u_4u_3)$, and we add column of $u_4u_3$ to column of $u_3u_2$: $[1,1,0,0] + [1,1,0,0] = [0,0,0,0]$. Therefore $u_3u_2$ is not paired. Notice that when we add column of $u_4u_2$ and column of $u_4u_3$ to column of $u_3u_2$ in $D$, we are also adding these columns in $P$. Consequently the column of $u_3u_2$ in $P$ will become: $[0,0,0,1,0] + [0,0,0,0,1] + [0,1,0,0,0] = [0,1,0,1,1]$. 

Then we continue the matrix reduction algorithm, $low_D(u_4u_1) = u_1$, and $low_D(u_3u_1) = u_1 = low_D(u_4u_1)$. Then we add column of $u_4u_1$ to column of $u_3u_1$: $[1,0,0,1] + [0,1,0,1] = [1,1,0,0]$, $low_D(u_3u_1) = u_3 = low_D(u_4u_3)$. Again we add column of $u_4u_3$ to column of $u_3u_1$: $[1,1,0,0] + [1,1,0,0] = [0,0,0,0]$. Therefore $u_3u_1$ is not paired, and column of $u_3u_1$ in $P$ becomes $[1,0,1,0,1]$. Matrix $D$ and $P$ will therefore become:

\begin{center}
	$\begin{matrix} 
	P & u_4u_3 & u_4u_2 & u_3u_2 & u_4u_1 & u_3u_1  \\ 
	u_1u_3 & 0 & 0 & 0 & 0 & 1 \\ 
	u_2u_3 & 0 & 0 & 1 & 0 & 0\\
	u_1u_4 & 0 & 0 & 0 & 1 & 1 \\
	u_2u_4 & 0 & 1 & 1 & 0 & 0 \\
	u_3u_4 & 1 & 0 & 1 & 0 & 1 \\
	\end{matrix} $ \\
	
	$\begin{matrix} 
	D & u_4u_3 & u_4u_2 & u_3u_2 & u_4u_1 & u_3u_1 \\  
	u_4 & 1 & 1 & 0 & 1 & 0 \\ 
	u_3 & 1 & 0 & 0 & 0 & 0 \\ 
	u_2 & 0 & 1 & 0 & 0 & 0 \\ 
	u_1 & 0 & 0 & 0 & 1 & 0 \\
	\end{matrix} $ \\
\end{center}

Similar to the process to discover the persistence point, negative edges and positive edges in Phase 1. We can find that all the persistence pairs appear and disappear at the same time, thus there is no persistence point. The positive edges in the descending sequence are $u_3u_2$ and $u_3u_1$. The negative edges in the descending sequence are $u_4u_3$, $u_4u_2$, and $u_4u_1$.

Notice that before Phase 3, all the persistence points are all 0-dimensional persistence points. In other words, they just record the birth and death of connected components. 1-dimensional extended persistence pair will be discussed in Phase 3, where positive edges in the descending filtration will each be paired with an edge in the ascending filtration, thus the saliency of loops can be measured.

\subsection*{A.2.3 Phase 3}
Phase 3 is the reduction process of $P$. Only positive descending edges (edges whose column in $D$ is 0) will be reduced, thus the value in $D$ will not be influenced. Similar to the definition of $low_A$ in Phase 1, we define $low_P(e_i)$ as the maximum row of edge $e_j$ for which $P[e_j, e_i] = 1$. 

From left to right, we only consider the positive descending edges. We find that $low_P(u_3u_2) = u_3u_4$, $low_P(u_3u_1) = u_3u_4 = low_P(u_3u_2)$. We add column of $u_3u_2$ to column of $u_3u_1$: $[0,1,0,1,1] + [1,0,1,0,1] = [1,1,1,1,0]$. 

As a consequence, we can get matrix $P$ and $D$ after Phase3:
\begin{center}
	$\begin{matrix} 
	P & u_4u_3 & u_4u_2 & u_3u_2 & u_4u_1 & u_3u_1  \\ 
	u_1u_3 & 0 & 0 & 0 & 0 & 1 \\ 
	u_2u_3 & 0 & 0 & 1 & 0 & 1 \\
	u_1u_4 & 0 & 0 & 0 & 1 & 1 \\
	u_2u_4 & 0 & 1 & 1 & 0 & 1 \\
	u_3u_4 & 1 & 0 & 1 & 0 & 0 \\
	\end{matrix} $ \\
	
	$\begin{matrix} 
	D & u_4u_3 & u_4u_2 & u_3u_2 & u_4u_1 & u_3u_1 \\  
	u_4 & 1 & 1 & 0 & 1 & 0 \\ 
	u_3 & 1 & 0 & 0 & 0 & 0 \\ 
	u_2 & 0 & 1 & 0 & 0 & 0 \\ 
	u_1 & 0 & 0 & 0 & 1 & 0 \\
	\end{matrix} $ \\
\end{center}

And the extended persistence pair is $(u_3u_4,u_3u_2)$, $(u_2u_4,u_3u_1)$. Notice that  for a extended persistence pair, the latter edge is the positive edge in the descending filtration, and the former edge is its paired edge in the ascending filtration. i.e, in the extended persistence pair $(u_3u_4,u_3u_2)$, $u_3u_2$ is the positive edge in the descending filtration, while $u_3u_4$ is the paired edge in the ascending filtration. Recall that for a certain edge, its filter value in the ascending sequence is defined as the maximum value of the filter value of its nodes: $f_a(u_3u_4) = max(f(u_3),f(u_4)) = max(t_3,t_4) = t_4$. Similarly, $f_a(u_2u_4) = t_4$. And for a certain edge in the descending filtration, its filter value is defined as the minimum value of the filter value of its nodes: $f_d(u_3u_2) = min(f(u_3),f(u_2)) = min(t_3,t_2) = t_2$. Similarly, $f_d(u_3u_1) = t_1$. As a consequence, the extended persistence point are $(t_4,t_2)$ and $(t_4,t_1)$ respectively.

After the whole matrix reduction algorithm, we can get the ordinary persistence diagram for 0-dimensional topological structures (connected components): $[(t_2,t_3)]$ and the extended persistence diagram for 1-dimensional topological structures (loops): $[(t_4,t_2),(t_4,t_1)]$. To get the 0-dimensional extended persistence diagram, the birth and death of the whole connected component is also recorded, that is, the minimum and the maximum filter value $(t_1,t_4)$. So the 0-dimensional extended persistence diagram is $[(t_2,t_3),(t_1,t_4)]$, as shown in Figure~\ref{fig:motivation_appendix} (c).

\section*{A.3 Correctness of the Faster Algorithm}
\label{sec:accal_appendix}
In this section, we provide complete proof of the correctness of the faster algorithm. For convenience, we restate the matrix reduction algorithms Alg.~\ref{alg:MR_appendix} and the proposed faster algorithm Alg.~\ref{alg:Acc_appendix}. We also restate the main theorem \ref{theorem_appendix}.

\begin{theorem}
	\label{theorem_appendix}
	Algorithm~\ref{alg:Acc_appendix} outputs the same extended persistence diagram as Algorithm~\ref{alg:MR_appendix}.
\end{theorem}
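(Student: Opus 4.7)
The plan is to prove Theorem~\ref{theorem_appendix} by induction on the positive descending edges, processed in their descending filtration order. First, I would dispose of the $0$-dimensional case: in Algorithm~\ref{alg:Acc} the $0$-dim diagram is computed by the standard Union-Find procedure on both filtrations, which is classically equivalent to Phases~1 and~2 of matrix reduction. So the entire task reduces to showing that the $1$-dim pairings produced by the two algorithms coincide. To do this, I would list the positive descending edges as $e_1,\ldots,e_\ell$ in the order they appear in the descending filtration, let $\pi(e_i)$ be the partner ascending edge assigned by Algorithm~\ref{alg:MR} (the lowest nonzero row in the reduced column of $e_i$ inside $\overline{P}$), and let $\pi'(e_i)$ be the partner returned by Algorithm~\ref{alg:Acc} (the ascending-latest edge in the unique loop formed by $e_i$ with the current tree $\mathcal{T}_{i-1}$). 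The goal is $\pi = \pi'$.

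The backbone of the argument is the following induction invariant maintained by Algorithm~\ref{alg:Acc}: after processing $e_1,\ldots,e_{i-1}$, the graph $\mathcal{T}_{i-1}$ is a spanning tree of the enclosing subgraph, and the unique cycle $C_i$ in $\mathcal{T}_{i-1}\cup\{e_i\}$ is exactly the set of ascending edges whose rows carry a $1$ in the column of $e_i$ in $M$ once the $D$-part of that column has been reduced (Phase~2). The base case follows because $\mathcal{T}_0$ consists of all nodes together with the negative descending edges, which by correctness of Union-Find is precisely a spanning tree, and inspection of the initial column structure of $P$ (together with the boundaries stored in the unreduced $A$-part) gives the cycle description. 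For the inductive step I would verify three claims in order:
\begin{enumerate}[topsep=0pt,itemsep=1pt]
\item Given the invariant at step $i$, the lowest nonzero $P$-entry of the reduced column of $e_i$ sits in the row of the edge $e_j\in C_i$ maximizing $f_a$, matching $\pi'(e_i)$.
\item This $e_j$ has not already been claimed by any earlier $\pi(e_k)$: indeed $e_j$ still lies in $\mathcal{T}_{i-1}$, and the tree update rule only removes edges that have just been paired, so no prior positive edge owns $e_j$.
\item Swapping $e_j$ for $e_i$, i.e.\ $\mathcal{T}_i = \mathcal{T}_{i-1}-\{e_j\}+\{e_i\}$, reproduces exactly the net column operation that Phase~3 of Algorithm~\ref{alg:MR} performs on all later columns containing a $1$ in row $e_j$, namely XOR-ing in the reduced column of $e_i$; geometrically this is the symmetric-difference step $Path_u\cup Path_v-Path_u\cap Path_v$ that defines the new cycle representatives.
\end{enumerate}

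Claims~1 and~2 are then essentially bookkeeping consequences of the invariant, while Claim~3 requires the key algebraic-to-combinatorial translation: a column of $P$, once its $D$-part has been reduced, encodes a $1$-cycle in the simplicial chain complex with $\mathbb{Z}/2$ coefficients, and since the graph has no $2$-simplices, $1$-cycles are precisely (edge-disjoint unions of) simple cycles in the graph. Mod-$2$ cycle addition therefore coincides with symmetric difference of edge sets, which is exactly the tree-based loop update used by Algorithm~\ref{alg:Acc}.

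The main obstacle will be Claim~3, since it is the place where one must be careful that (i) the reduction Algorithm~\ref{alg:MR} may in principle also add columns of \emph{negative} descending edges to the column of $e_i$, and (ii) the simultaneous side-effects on $P$ and $D$ do not disturb the cycle interpretation. I would handle (i) by invoking the standard reordering freedom of matrix reduction: as noted in the Phase~1/2/3 decomposition recalled in Appendix~A.2, one may reduce $A$ fully, then $D$ fully, then $P$; after Phase~2 the $D$-part of every positive descending column is zero, so in Phase~3 only columns of positive descending edges are ever added, preserving the cycle structure. For (ii), the $D$-part of the column of $e_i$ being zero after Phase~2 means the $P$-entries indeed correspond to a $1$-cycle, which combined with $\mathcal{T}_{i-1}\cup\{e_i\}$ containing a unique cycle pins the $P$-entries to $C_i$ as claimed. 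With these three claims in hand, the induction closes and yields $\pi=\pi'$ for every $i$, completing the proof.
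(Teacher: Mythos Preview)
Your proposal is correct and follows essentially the same route as the paper's proof: induction over the positive descending edges, with the invariant that the current tree $\mathcal{T}_{i-1}$ together with $e_i$ yields the cycle encoded by the (partially reduced) $P$-column of $e_i$, and the three claims corresponding to the paper's Lemmas~\ref{lemma1}--\ref{lemma11} and Lemma~\ref{lemma4}. One small wording point: as stated, your invariant says the cycle $C_i$ matches the column of $e_i$ ``once the $D$-part of that column has been reduced (Phase~2)'', but immediately after Phase~2 that column still encodes the cycle in $\mathcal{T}_0\cup\{e_i\}$, not $\mathcal{T}_{i-1}\cup\{e_i\}$; the match with $\mathcal{T}_{i-1}$ only holds after also applying the Phase~3 column additions corresponding to $e_1,\ldots,e_{i-1}$, which is exactly what your Claim~3 supplies. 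This is only a phrasing issue and your inductive argument handles it correctly.
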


To prove Theorem~\ref{theorem_appendix}, the core of our proof is to show that for each positive descending edge, its corresponding pair found by the new algorithm is equivalent to the pair resulting from the matrix reduction algorithm. We prove this by induction. As we go through all positive descending edges in the descending filtration, we show that for each positive edge, which creates a new loop, its extended persistence pair from our algorithm is the same as the reduction result.

In Lemma~\ref{lemma1}, we prove that after reducing the $D$ part of the column of a given edge $e_i$, i.e., when $low_M(e_i)\leq m$, the remaining entries of $e_i$ in $P$ constitute a unique loop in $\{e_i\}\cup \T$.


\begin{minipage}{0.46\textwidth}
\begin{algorithm}[H]
    \centering
	\caption{Matrix Reduction}
	\label{alg:MR_appendix}
	\begin{algorithmic}[1]
		\STATE {\bfseries Input:} filter funtion $f$, graph $G$
		\STATE Persistence Diagram $PD=\{\}$
		\STATE$M=$ build reduction matrix$(f,G)$
		\FOR{$j=1$ {\bfseries to} $2m$}
		\WHILE{$\exists k < j$ with $low_M(k)=low_M(j)$}
		\STATE add column $k$ to column $j$
		\ENDWHILE
		\STATE add $(f(low_M(j)),f(j))$ to $PD$
		\ENDFOR
		\STATE {\bfseries Output:} Persistence Diagram $PD$
	\end{algorithmic}
\end{algorithm}
\end{minipage}
\hfill
\begin{minipage}{0.46\textwidth}
\begin{algorithm}[H]
    \centering
    \caption{A Faster Algorithm for Extended Persistence Diagram}
	\label{alg:Acc_appendix}
	\begin{algorithmic}
	    \STATE {\bfseries Input:} filter funtion $f$ of descending filtration, graph $G$, filter function $f_a$ of ascending filtration
		\STATE 0-dim PD, $E_{pos}$, $E_{neg}$ = Union-Find($G,f$)
		\STATE Tree $\T$ = $E_{neg}$ + all the nodes
		\STATE 1-dim PD = $\{\}$
		\FOR{$e_j$ in $E_{pos}$}
		\STATE assume $e_j=uv$, $Path_u\subseteq \T$ is the path from $u$ to $r$ within the tree $\T$, $Path_v \subseteq \T$ is the path from $v$ to $r$
		\STATE $Loop = Path_u \cup Path_v -  Path_u \cap Path_v$ + $e_j$
		\STATE add $(max_{e\in Loop} f_a(e),f(e_j))$ to 1-dim PD
		\STATE $ e_k = argmax_{e\in Loop} f_a(e)$
		\STATE $\T = \T - \{e_k\} + \{e_j\}$
		\ENDFOR
		\STATE {\bfseries Output:} 0-dim PD, 1-dim PD 
	\end{algorithmic}
\end{algorithm}
\end{minipage}

In Lemma~\ref{lemma2} and Lemma~\ref{lemma3}, we prove that: (1) The lowest entry of the reduced column, $e_j$, is not paired by any other previous edges, and thus will be paired with $e_i$. Indeed, it is the last edge in the loop w.r.t.~the ascending ordering. (2) The updating of the tree $\T = \T - \{e_j\} + \{e_i\}$ is equivalent to adding the reduced column of $e_i$ to all descending columns with nonzero entry at row of $e_j$. Although this will affect the final reduced matrix, it will not change the resulting simplex pairings.

In Lemma~\ref{lemma11}, we further prove inductively that in Phase 3, the remaining entries of $e_i$ in $P$ constitute a unique loop in $\{e_i\}\cup \T_{i-1}$. Here $\T_i$ is the tree after updating the first $i$ positive edges.

Finally, in Lemma~\ref{lemma4}, we prove that the highest filter value and the lowest value exactly form the persistence point of the loop.

\begin{lemma}
	\label{lemma1}
	After Phase 2 (the reduction of descending matrix $D$) and before Phase 3 (the reduction of permutation matrix $P$), for a positive edge $\lambda_j$, the set $\{\kappa_i | P[i,j] = 1\}$ stores the loop that $\lambda_j$ and some of the former negative edges form. Besides, it is the loop that $\lambda_j$ gives birth to.
\end{lemma}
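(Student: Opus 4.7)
The plan is to interpret each column of the descending portion of $M$ as a $\mathbb{Z}/2$ $1$-chain and to track what Phase~2 does to these chains. Initially the column of $\lambda_j$ encodes the singleton chain $\{\lambda_j\}$: its entries in $D$ record the boundary $\partial \lambda_j$ (the two endpoint nodes, viewed as descending simplices), its unique $P$-entry marks that $\lambda_j$ itself appears in the chain, and all remaining entries are zero. Each column addition is, by definition, a mod-$2$ addition of the underlying chains, so at every point during Phase~2 the column of $\lambda_j$ represents some chain $c_j = \lambda_j + \sum_{k<j} a_k\, c_k$ with $a_k \in \{0,1\}$, its $D$-entries encode $\partial c_j$, and the set $\{\kappa_i : P[\kappa_i,\lambda_j]=1\}$ is precisely the support of $c_j$.

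First I would observe that Phase~2 never selects a positive descending edge as a pivot (its $D$-column is already zero and it is not lowered by any row), so only previously processed \emph{negative} descending edges are ever added into the column of $\lambda_j$. Consequently, after Phase~2, the chain $c_j$ lies in the span of $\lambda_j$ together with the prior negative descending edges, and it is a cycle because the $D$-part has been reduced to zero. Standard union--find reasoning applied to the descending filtration (as in \cite{edelsbrunner2010computational}) then tells us that these negative descending edges form a spanning forest $F_{j-1}$ of the subgraph of descending simplices of index $<j$, and that the endpoints $u,v$ of $\lambda_j$ lie in the same tree of $F_{j-1}$ precisely because $\lambda_j$ is positive.

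The key step is to identify $c_j$ with $\lambda_j$ plus the unique $u$--$v$ path in $F_{j-1}$, i.e., with the simple loop born when $\lambda_j$ is inserted in the descending filtration. Existence is immediate: the sum of $\lambda_j$ and the tree path is a cycle supported on the allowed edges. For uniqueness, suppose $c_j$ and $c_j'$ both satisfy these conditions; then $c_j + c_j'$ is a cycle whose support lies entirely within $F_{j-1}$. Since a forest contains no nontrivial cycle, $c_j + c_j' = 0$, hence $c_j = c_j'$. This forces the nonzero $P$-entries in column $\lambda_j$ to encode exactly the loop $\lambda_j$ gives birth to, proving both clauses of the lemma.

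The main obstacle I anticipate is making the union--find correspondence rigorous inside the matrix-reduction language: one must verify that the Phase~2 pivot structure really does record the spanning forest of the descending filtration and that no positive edge ever serves as a pivot, which requires a careful induction on the descending order. Once that bookkeeping is established, the acyclicity of the negative-edge forest gives the uniqueness step cleanly, and the chain interpretation delivers the lemma.
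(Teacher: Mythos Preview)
Your proposal is correct and follows essentially the same route as the paper's proof: both track which columns are added to column $\lambda_j$ during Phase~2 (the paper via the upper-triangular change-of-basis $V_2$ with $\redD=DV_2$, you via the chain interpretation), observe that only prior negative descending edges participate because positive edges have no pivot in $D$, and then read the resulting cycle off the $P$-entries using that $P$ is a permutation. Your explicit uniqueness step---that the prior negative edges form a spanning forest $F_{j-1}$, so any nonzero cycle supported on $\{\lambda_j\}\cup F_{j-1}$ must be $\lambda_j$ plus the unique tree path between its endpoints---is a genuine tightening, since the paper simply asserts the set ``is the loop that $\lambda_j$ gives rise to'' without separating it from other conceivable cycles on the same edge set.
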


\begin{proof}
    Denote the reduced matrix and its submatrices after Algorithm~\ref{alg:MR_appendix} by $\redM=\left[\begin{matrix} \redA & \redP \\ 0 & \redD\end{matrix} \right] $. As is shown in \cite{edelsbrunner2010computational}, matrix reduction algorithm can be interpreted as computing the reduced matrix i.e., $\redA = AV_1$, $\redD = DV_2$, $\redP = PV_2V_3$, where $V_1$, $V_2$ and $V_3$ are invertible and upper-triangular matrices.
    
    For a positive edge $\lambda_j$, we observe that after Phase 2, its column in $D$ is set to zero. Here $\redD[:,j]$ is denoted as the $j$-th column of matrix $\redD$, and we have $DV_2[:,j] = \redD[:,j] = 0$. According to the definition of loop, the boundary of $\lambda_j$ is finally reduced to zero, so the set $\{\lambda_i|V_2[i,j] = 1\}$ contains all the edges that form the loop which $\lambda_j$ gives birth to. Considering that (1) $V_2$ is upper-triangular, thus only columns of former edges will be added to the column of $\lambda_j$. (2) All the columns of former positive edges in $D$ have been reduced to zero, thus $\lambda_j$ will not be reduced by positive edges. $\{\lambda_i|V_2[i,j] = 1\}$ contains the loop that $\lambda_j$ and some of the former negative edges form. 
    
    In fact, after Phase 2, $P$ represents the matrix $PV_2$. Recall that $P$ stores the permutation that connects the two sequences of simplices, i.e., $P[i; j] = 1$ iff $\kappa_i$ and $\lambda_j$ denote the same simplex. Thus $\{\kappa_i | PV_2[i,j] = 1\}$ stores the same component as $\{\lambda_i|V_2[i,j] = 1\}$, that is the loop $\lambda_j$ and some of the negative edges form. It is the loop that $\lambda_j$ gives rise to.
\end{proof}

In Algorithm~\ref{alg:Acc_appendix}, we first add all the negative edges and all the nodes to form the original tree $\T$. If we add a positive edge $\lambda_j$ to $\T$, then a loop will appear. Considering that in the graph $\T$ + $\{\lambda_j\}$, there is only one loop that $\lambda_j$ gives birth to, and it consists of $\lambda_j$ and some of the negative edges that appear before $\lambda_j$ in the descending sequence. Therefore, it is exactly the loop that set $\{\kappa_i | P[i,j] = 1\}$ consists of after Phase 2. 

In the following lemmas, we try to show that: In Algorithm~\ref{alg:Acc_appendix}, replacing the paired edge $e_k$ with the positive edge $e_j$ every step has the same result with the same step in Phase 3 (matrix reduction for $P$).

\begin{lemma}
	\label{lemma2}
	In Algorithm~\ref{alg:Acc_appendix}, the process of replacing the positive edge $e_j$ with its paired edge $e_k$ is equivalent to adding the column of $e_j$ to all the columns whose row of $e_k$ is one.
\end{lemma}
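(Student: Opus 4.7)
The plan is to identify each column of $P$ (after Phase~2, restricted to edge rows) with the indicator vector of a cycle in the graph. By Lemma~\ref{lemma1}, for every unprocessed positive descending edge $e$ the column $P[:,e]$ encodes the unique loop $C_e$ in $\T \cup \{e\}$, where $\T$ is the current tree of negative edges. Under this identification, the mod-$2$ column sum $P[:,e] + P[:,e_j]$ represents the symmetric difference $C_e \triangle C_{e_j}$ of edge sets, so the column operation in the statement will translate directly into an operation on loops.

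I would then analyze how the tree swap $\T' := \T - \{e_k\} + \{e_j\}$ changes the loop attached to each remaining unprocessed positive edge $e$. First I would verify that $\T'$ is still a spanning tree: since $e_k$ lies on the unique cycle $C_{e_j}$ in $\T \cup \{e_j\}$, removing it breaks that cycle and leaves $\T'$ acyclic and connected. Next I would split into two cases based on $P[e_k,e]$. If $P[e_k,e] = 0$, then $e_k \notin C_e$, so $C_e$ is already a cycle of $\T' \cup \{e\}$ and, by uniqueness of the loop in a tree plus one edge, equals the new loop $C'_e$; no column update is needed, consistent with not adding column $e_j$ to column $e$. If $P[e_k,e] = 1$, then $e_k \in C_e$, and I would consider $C_e \triangle C_{e_j}$: it avoids $e_k$ (shared by both and cancelled), contains both $e$ and $e_j$, and uses only edges of $\T' \cup \{e\}$, hence equals $C'_e$. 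This new loop is exactly what adding column $e_j$ to column $e$ produces.

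Combining the two cases yields the claim: the tree swap has the same effect on every column of $P$ as adding column $e_j$ to every column whose row-$e_k$ entry is $1$. The main obstacle will be justifying, in the second case, that $C_e \triangle C_{e_j}$ is indeed a \emph{single} cycle rather than an edge-disjoint union of several cycles (which is what a symmetric difference of cycles can be in general). This is resolved by observing that any cycle contained in the tree-plus-edge graph $\T' \cup \{e\}$ must pass through $e$; hence any decomposition of $C_e \triangle C_{e_j} \subseteq \T' \cup \{e\}$ into edge-disjoint cycles can contain at most one cycle, and the decomposition is trivial. The remainder of the argument is routine mod-$2$ bookkeeping and was essentially set up by the identification from Lemma~\ref{lemma1}.
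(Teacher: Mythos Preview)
Your argument is correct for the unprocessed positive edges and follows the same route as the paper: identify each column of $P$ with the indicator vector of the fundamental cycle in $\T\cup\{e\}$, interpret mod-$2$ column addition as symmetric difference of cycles, and invoke uniqueness of the cycle in a tree-plus-one-edge graph. Your treatment is in fact more careful than the paper's on the point you flag as the main obstacle: the paper simply asserts that the updated column encodes the unique loop in $\T_{e_j}\cup\{e_l\}$ without checking that the symmetric difference is a single simple cycle, whereas you supply that verification.

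The one piece you skip is the paper's final paragraph on \emph{already-processed} columns. The lemma as stated applies to \emph{all} columns with row $e_k$ equal to one, including positive descending edges $e_l$ that precede $e_j$ and have already been paired. For those the tree swap does nothing (their output is fixed), so ``equivalence'' must be read at the level of pairings rather than column values; the paper observes that for such $e_l$ one has $low_P(e_l)>e_k$ (else $e_k$ would have been paired earlier), hence adding column $e_j$ leaves $low_P(e_l)$ unchanged. This remark is really the bridge to Lemma~\ref{lemma3}, and your restriction to unprocessed edges captures the substantive content of Lemma~\ref{lemma2}, but you should add one sentence covering the processed columns to match the statement as written.
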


\begin{proof}
    If we add the column of $e_j$ to all the later columns whose row of $e_k$ is ``one", all the rows of $e_k$ in the later columns will be zero. i.e., assume $P[e_k,e_l] = 1$ \footnote{Here, for simplicity, we use $e_k$ and $e_l$ to represent their indices.}, then after adding column of $e_j$ with column of $e_l$, $P[e_k,e_l] = 0$. The set $\{e_i|P[e_i,e_l] = 1\}$ contains the loop that appear before $e_l$ (without $e_k$) and $e_j$, $e_l$.
    
    Here we denote $\T_{e_j} = \T - \{e_k\} + \{e_j\}$. In the graph $\T_{e_j}$ + $\{e_l\}$, there is only a loop. Therefore, it is exactly the loop that set $\{e_i|P[e_i,e_l]\}$ consists of.
    
    For the former columns whose row of $e_k$ is ``one", i.e., $P[e_k,e_l] = 1$. This means $low_P(e_l) > e_k$ \footnote{If $low_P(e_l) = e_k$, $e_k$ will be paired in the former columns, thus dissatisfies the assumption that $low_P(e_j) = e_k$.}, then adding column $e_j$ to $e_l$ will not change the extended persistence pair because $low_P(e_l)$ will not be affected by previous edges and thus remain the same.
\end{proof}

\begin{lemma}
	\label{lemma3}
	Adding the column of $e_j$ to all the columns whose row of $e_k$ is one in Lemma~\ref{lemma2} has the same extended pair as the matrix reduction algorithm in Phase 3.
\end{lemma}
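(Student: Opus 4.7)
The plan is to reduce the claim to the standard uniqueness theorem for persistent-homology matrix reduction: any two invertible upper-triangular column operations that fully reduce a given boundary matrix produce the same low-to-column pairing, even though the reduced matrices themselves may differ (see, e.g., Edelsbrunner and Harer, Chapter VII). Under this theorem, it suffices to verify that the alternative schedule of column operations induced by Lemma~\ref{lemma2}, applied iteratively for each positive descending edge in the descending-filtration order, (i) consists only of left-to-right column additions on $P$, and (ii) terminates in a reduced matrix.

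For (i), Algorithm~\ref{alg:Acc_appendix} processes positive edges in the order of $E_{pos}$, which coincides with the descending column ordering of $P$; Lemma~\ref{lemma2} only adds column $e_j$ to strictly later columns $e_l$ satisfying $P[e_k,e_l]=1$. Hence every addition is left-to-right. For (ii), I would argue by induction on the number of positive edges processed. The invariant is that after handling the first $t$ positive edges $e_{j_1},\ldots,e_{j_t}$ with paired rows $e_{k_1},\ldots,e_{k_t}$, all $e_{k_s}$-entries in columns strictly to the right of $e_{j_s}$ have been zeroed, and the lows of the processed columns are exactly the distinct rows $e_{k_1},\ldots,e_{k_t}$. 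The inductive step uses Lemma~\ref{lemma1}: the current column of the next positive edge $e_{j_{t+1}}$ encodes the unique loop it forms with the current tree; prior clearing steps have only removed entries at already-paired rows, so the low $e_{k_{t+1}}$ identified by Algorithm~\ref{alg:Acc_appendix} as the ascending-latest edge of this loop coincides with the matrix low and is distinct from the $e_{k_s}$ with $s\le t$. Lemma~\ref{lemma2} then clears $e_{k_{t+1}}$ from all subsequent columns, preserving the invariant. After all positive edges are processed, all column lows are distinct, so the matrix is reduced.

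Applying the uniqueness theorem, both the alternative schedule above and the standard Phase 3 reduce $P$ via valid left-to-right column additions, and therefore produce identical extended persistence pairs $(e_{k_s}, e_{j_s})$. The main technical obstacle is verifying that prior clearing operations do not shift the low of the next positive column away from what Lemma~\ref{lemma1} would predict for the unmodified matrix. This comes down to confirming that the tree update $\mathcal{T}\leftarrow \mathcal{T}-\{e_k\}+\{e_j\}$ in Algorithm~\ref{alg:Acc_appendix} faithfully mirrors the matrix-side effect of clearing rows paired at previous steps, so that the loop observed in the tree at step $t+1$ is exactly the set of nonzero entries of column $e_{j_{t+1}}$ in $P$ at that point. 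Once this correspondence is in hand, the induction goes through and Lemma~\ref{lemma4} will then convert the matched $(e_{k},e_{j})$ pairs into the desired persistence points.
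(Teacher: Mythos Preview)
Your approach is sound but takes a different and more global route than the paper's, and in doing so proves considerably more than Lemma~\ref{lemma3} asks.

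The paper's proof is a local two-case argument about a single clearing step. For any column $e_l$ with $P[e_k,e_l]=1$: if $low_P(e_l)=e_k$, then standard Phase~3 reduction would also add column $e_j$ to $e_l$, so the two schedules agree on this column; if $low_P(e_l)>e_k$, then adding $e_j$ (whose low is $e_k$) cannot touch $low_P(e_l)$, so the pair eventually assigned to $e_l$ is unchanged. That is the whole argument --- no induction over all positive edges, no explicit appeal to the uniqueness theorem, and no need to verify that the full alternative schedule reaches a reduced matrix.

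Your plan instead folds Lemmas~\ref{lemma2}, \ref{lemma3}, and~\ref{lemma11} into a single induction and invokes the pairing-uniqueness theorem once at the end. This is legitimate and arguably cleaner as a proof of the full Theorem~\ref{theorem_appendix}, and it makes explicit the dependence on uniqueness that the paper's case split leaves implicit. The cost is that you must establish upfront the tree--matrix correspondence for the \emph{updated} tree $\mathcal{T}_t$ (which you correctly flag as the main obstacle and defer), whereas the paper isolates that step later in Lemma~\ref{lemma11}. One small discrepancy: your assertion that ``Lemma~\ref{lemma2} only adds column $e_j$ to strictly later columns'' does not match the paper's statement, which allows additions to former columns as well and argues separately that those leave lows intact; restricting to later columns is exactly what you need for the uniqueness theorem, but you should state why the former-column additions can be ignored.
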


\begin{proof}
    Assume $P[e_k,e_l] = 1$, for simplicity, we define $low_P(e_l)$ as the maximum row index $e_i$ for which $P[e_i, e_l] = 1$. If $low_P(e_l) = e_k$, then in the matrix reduction algorithm, we should add column $e_j$ to $e_l$, thus the two algorithms are exactly the same. If $low_P(e_l) \neq e_k$, which means $low_P(e_l) > e_k$, then adding column $e_j$ to $e_l$ will not change the extended persistence pair because $low_P(e_l)$ will not be affected by previous edges and thus remain the same. As a consequence, adding the column of $e_j$ to all the later columns whose row of $e_k$ is one in Lemma~\ref{lemma2} has the same extended pair with the matrix reduction algorithm in Phase 3. Together with Lemma~\ref{lemma2}, adding the column of $e_j$ to all the columns whose row of $e_k$ is one in Lemma~\ref{lemma2} has the same extended pair with the matrix reduction algorithm in Phase 3.
\end{proof}

From Lemma~\ref{lemma2} and Lemma~\ref{lemma3}, we manage to prove that in Algorithm~\ref{alg:Acc_appendix}, replacing the paired edge $e_k$ with the corresponding positive edge $e_j$ is equivalent to Phase 3 (matrix reduction for $P$). Combining it with Lemma~\ref{lemma1}, we can prove that for every positive edge, to update $\T$ in the faster algorithm leads to the same extended pair with the matrix reduction algorithm. We have proved that in a single step, the two algorithms are equivalent. Then we should prove inductively that the whole process of the Algorithm~\ref{alg:Acc} is equivalent to the matrix reduction Algorithm~\ref{alg:MR_appendix}. 

\begin{lemma}
	\label{lemma11}
	In Algorithm~\ref{alg:Acc}, the process to update the tree $\T$ is equivalent to the matrix reduction process in Phase 3.
\end{lemma}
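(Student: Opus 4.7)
The plan is to proceed by induction on the index $i$ of positive descending edges in the order Algorithm~\ref{alg:Acc_appendix} processes them, maintaining a single invariant after iteration $i$: (i) the tree $\T_i$ equals $\T_{i-1}-\{e_k\}+\{e_i\}$ where $e_k$ is the edge paired with $e_i$; (ii) for every not-yet-processed positive descending edge $\lambda_l$, the current nonzero entries of column $\lambda_l$ of $P$, after exactly the partial Phase~3 reductions corresponding to iterations $1,\dots,i$, list precisely the edges of the unique loop in $\T_i\cup\{\lambda_l\}$; (iii) for each of the first $i$ positive edges, Algorithm~\ref{alg:Acc_appendix}'s pair agrees with Phase~3's pair. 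The base case $i=0$ is exactly Lemma~\ref{lemma1}, applied with $\T_0$ equal to all negative descending edges together with all nodes.

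For the inductive step, fix $i$ and assume the invariant after iteration $i-1$. By the hypothesis, column $e_i$ in $P$ already encodes the unique loop of $\T_{i-1}\cup\{e_i\}$, which is exactly the set \emph{Loop} computed by Algorithm~\ref{alg:Acc_appendix}. The $low_P$ entry of this column in the matrix reduction is therefore the ascending-order maximum of \emph{Loop}, matching the $e_k$ chosen by the faster algorithm. Moreover $e_k$ cannot equal $low_P$ of any earlier processed column: otherwise $e_k$ would have been swapped out of the tree at an earlier iteration and could not appear in the loop of $\T_{i-1}\cup\{e_i\}$. Consequently Phase~3 performs no preliminary column additions before pairing, and outputs $(e_k,e_i)$, agreeing with Algorithm~\ref{alg:Acc_appendix}. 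Lemmas~\ref{lemma2} and~\ref{lemma3} then certify that performing the tree swap $\T_i=\T_{i-1}-\{e_k\}+\{e_i\}$ has, for all subsequent pairings, the same effect as the column operation that adds column $e_i$ to every later column with a one in row $e_k$.

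It remains to check that the loop-invariant carries over to $\T_i$ for every unprocessed positive edge $\lambda_l$. If $P[e_k,l]=0$ before iteration $i$, column $\lambda_l$ is untouched; by the inductive hypothesis it lists a loop in $\T_{i-1}\cup\{\lambda_l\}$ that does not contain $e_k$, hence the same edges still form a cycle inside $\T_i\cup\{\lambda_l\}$, and uniqueness of a loop in a tree-plus-one-edge pins it down as the desired loop. If $P[e_k,l]=1$, column $\lambda_l$ receives column $e_i$ added to it. Viewing columns as mod-$2$ $1$-chains, the result is the symmetric difference of the old loop for $\lambda_l$ (which contains $e_k$) and the loop for $e_i$ (which also contains $e_k$): the edge $e_k$ cancels, $e_i$ appears exactly once, and what remains is a cycle in $\T_i\cup\{\lambda_l\}$ containing $\lambda_l$, which again is forced to be the unique loop there.

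I expect case (ii) of this last verification, the $P[e_k,l]=1$ branch, to be the main obstacle: it is tempting to argue by picture, but it has to be done at the chain level, using that a mod-$2$ sum of two simple cycles sharing exactly one edge is again a simple cycle, and then invoking one-dimensionality of the cycle space of a tree plus one edge to identify it with the unique loop in $\T_i\cup\{\lambda_l\}$. A subsidiary subtlety, handled above, is ruling out any prior $low_P$ collision at row $e_k$, which is what lets us conclude that Phase~3 makes exactly the column additions described by Lemma~\ref{lemma2} and nothing more. Combining the three pieces of the invariant at termination with the already-known $0$-dimensional correctness gives Theorem~\ref{theorem_appendix}.
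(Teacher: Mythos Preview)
Your proposal is correct and follows essentially the same inductive skeleton as the paper's proof: induct on the positive descending edges, use Lemma~\ref{lemma1} for the base case, and invoke Lemmas~\ref{lemma2} and~\ref{lemma3} at each step to identify the tree swap with the corresponding column operations. Your version is in fact more careful than the paper's, which merely gestures at ``similar to the proof in Lemma~\ref{lemma1}'' and ``similar to the proof in Lemma~\ref{lemma2} and Lemma~\ref{lemma3}''; your explicit loop-encoding invariant for all unprocessed columns and the two-case verification ($P[e_k,l]=0$ versus $P[e_k,l]=1$) fill in exactly the details the paper elides.
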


\begin{proof}
	First, in the original lemmas, we have proved that adding a positive edge to the original tree $\T$ and updating the tree leads to the same result as the matrix reduction algorithm.
	
	We then assume that after adding the first $j-1$ positive edges, the process of updating the tree can output the same results as the matrix reduction algorithm. Denote the tree after updating the first $j-1$ positive edges as $\T_{j-1}$. 
	
	When adding the $j$-th positive edge $e_j$. Similar to the prove in Lemma~\ref{lemma1}, we can prove that the set $\{e_i|P[e_i,e_j] = 1\}$ stores the loop that $e_j$ and $\T_{j-1}$ form. And similar to the prove in Lemma~\ref{lemma2} and Lemma~\ref{lemma3}, we can prove that replacing the paired edge $e_k$ with the newly added positive edge $e_j$ leads to the same extended persistence pair with the matrix reduction algorithm in Phase 3. As a consequence, the process of updating the tree to $\T_j$ can lead to the same results as the reduction algorithm. Then Lemma~\ref{lemma11} is proved inductively. 
\end{proof}

In the above Lemmas, we have proven that to update the tree $\T$ in the faster algorithm output the same result as the matrix reduction algorithm. Then we should confirm that the extended point given by Algorithm~\ref{alg:Acc_appendix} is correct.

\begin{lemma}
	\label{lemma4} 
	In a loop, the highest filter value and the lowest filter value form its extended persistence point.
\end{lemma}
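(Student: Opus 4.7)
The plan is to invoke the correctness of the fast algorithm established in Lemma~\ref{lemma11} and simply read off both coordinates from the pair that Algorithm~\ref{alg:Acc_appendix} records. Let $L$ be the loop associated with a $1$-dimensional extended persistence point, produced as the fundamental cycle of some positive descending edge $e_j$ with the tree $\T_{i-1}$ at iteration $i$. The pair written to the diagram is $(\max_{e\in L} f_a(e),\,f(e_j))$, so the task reduces to identifying these two quantities with the highest and lowest filter values of $L$.

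For the birth coordinate, I would use $f_a(uv)=\max\{f(u),f(v)\}$ together with the fact that every vertex of $L$ is an endpoint of some loop edge, so the maximum of $f_a$ over loop edges collapses to the maximum of $f$ over loop vertices, i.e., the highest filter value in $L$.

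For the death coordinate the goal is $f(e_j)=\min_{v\in L}f(v)$, and the heart of the argument is that $e_j$ realises the smallest descending filter value among all edges of $L$. Because $E_{pos}$ is enumerated in descending filtration order and the tree $\T_{i-1}$ is built from negative edges together with positive edges inserted at earlier iterations (and not yet removed), every edge of $L\setminus\{e_j\}$ was processed strictly before $e_j$ in the descending filtration and therefore has descending filter value $\geq f(e_j)$. Unfolding $f(uv)=\min\{f(u),f(v)\}$ then forces $f(v)\geq f(e_j)$ for every $v\in L$, while one endpoint of $e_j$ attains equality; hence $\min_{v\in L}f(v)=f(e_j)$.

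I expect the main obstacle to be precisely this invariant on $\T_{i-1}$: one must carefully check that the tree update $\T=\T-\{e_k\}+\{e_j\}$ never reintroduces an edge processed after $e_j$, since removals only delete tree edges and insertions only add the currently processed positive edge. Once this is in place, the two coordinate calculations combine into $(\max_{v\in L}f(v),\,\min_{v\in L}f(v))$, which is exactly the claim of the lemma.
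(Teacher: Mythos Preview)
Your birth-coordinate argument is correct and in fact more explicit than the paper's, which locates the paired ascending edge via $\mathrm{low}_P$ rather than directly computing $\max_{e\in L}f_a(e)=\max_{v\in L}f(v)$.

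The death-coordinate argument has a genuine gap. You claim that every edge of $L\setminus\{e_j\}$ was processed before $e_j$ in the descending filtration because $\T_{i-1}$ consists of negative edges plus positive edges from earlier iterations. This reasoning covers the positive tree edges but not the negative ones: the initial tree $\T_0$ contains \emph{all} negative descending edges, including those with descending filter value strictly smaller than $f_d(e_j)$, i.e.\ edges appearing \emph{after} $e_j$ in the descending order. The obstacle you flag---that tree updates never introduce later edges---is beside the point, since the later negative edges are present in $\T_0$ from the start. What actually ensures that the fundamental cycle of $e_j$ uses only earlier edges is the equivalence with Phase~2 reduction established in Lemma~\ref{lemma1} and Lemma~\ref{lemma11}: the column of $e_j$ is reduced using only preceding columns by upper-triangularity of $V_2$, so the recorded cycle lies entirely among former simplices. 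The paper's proof takes precisely this matrix-reduction view, simply asserting that ``$e_j$ is the latest born edge in the descending filtration'' on the strength of the preceding lemmas. You announce at the outset that you will invoke Lemma~\ref{lemma11}; doing so at this step, instead of the direct tree-composition argument, closes the gap.
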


\begin{proof}
	For a positive edge $e_j$, the edge it pairs in matrix $P$ is the lowest one in its column, representing the latest one in the ascending filtration. Notice that in the ascending filtration, we define the filter value of an edge $f_a(uv) = max(f(u),f(v))$, thus it has the biggest filter value in the loop. $e_j$ is the latest born edge in the descending filtration, Recall that we define the filter value of an edge in the descending filtration as $f_d(uv) = min(f(u),f(v))$, thus it contains the lowest filter value in the loop. As a result, the extended persistence point will be the highest and lowest filter value of the loop. 
\end{proof}

From Lemma~\ref{lemma4}, we manage to prove that the persistence point of the positive edge $e_j$ is exact the value provided in Algorithm~\ref{alg:Acc_appendix}. As a consequence, we can justify Theorem~\ref{theorem_appendix}: Algorithm~\ref{alg:Acc_appendix} outputs the same extended persistence diagram as Algorithm~\ref{alg:MR_appendix}.

\section*{A.4 Experiment}

\subsection*{A.4.1 Introduction to Real-World Datasets}
The real-world datasets in this paper include:
\begin{enumerate}
	\item Citation network: PubMed \cite{sen2008collective} is a standard benchmark describing citation network where nodes denote scientific papers and edges are citations between them.
	\item Amazon networks: Photo and Computers \cite{shchur2018pitfalls} are datasets related to Amazon shopping records where nodes represent products and edges imply that two products are frequently brought together.
	\item PPI networks: 24 Protein-protein interaction networks\cite{zitnik2017predicting} where nodes denote protein and edges represent the interaction between proteins. Each graph has 3000 nodes with average degree 28.8. The dimension of node feature vector is 50.
\end{enumerate}
The detailed statistics of these data is shown in Table~\ref{tab:data}. Because PPI networks contain multiple graphs, we do not add it in the Table.

	\begin{table}[t]
		\caption{Statistics of benchmark datasets}
		\label{tab:data}
		\begin{center}
			\begin{small}
				\begin{sc}
					\scalebox{0.9}{
						\begin{tabular}{lcccc}
							\hline\noalign{\smallskip}
							Dataset & Features & Nodes & Edges & Edge density \\
							\noalign{\smallskip}\hline\noalign{\smallskip}
							Pubmed  & 500 &  19717 & 44338 &  0.0002  \\
							Photo & 745 & 7487 & 119043 & 0.0042 \\
							Computers & 767 & 13381 & 245779 & 0.0027 \\
							
							\noalign{\smallskip}\hline
						\end{tabular}
					}
				\end{sc}
			\end{small}
		\end{center}
		\vskip -0.1in
	\end{table}

\begin{figure*}[btp]{
		\centering
		\begin{center}
			\includegraphics[width=\columnwidth]{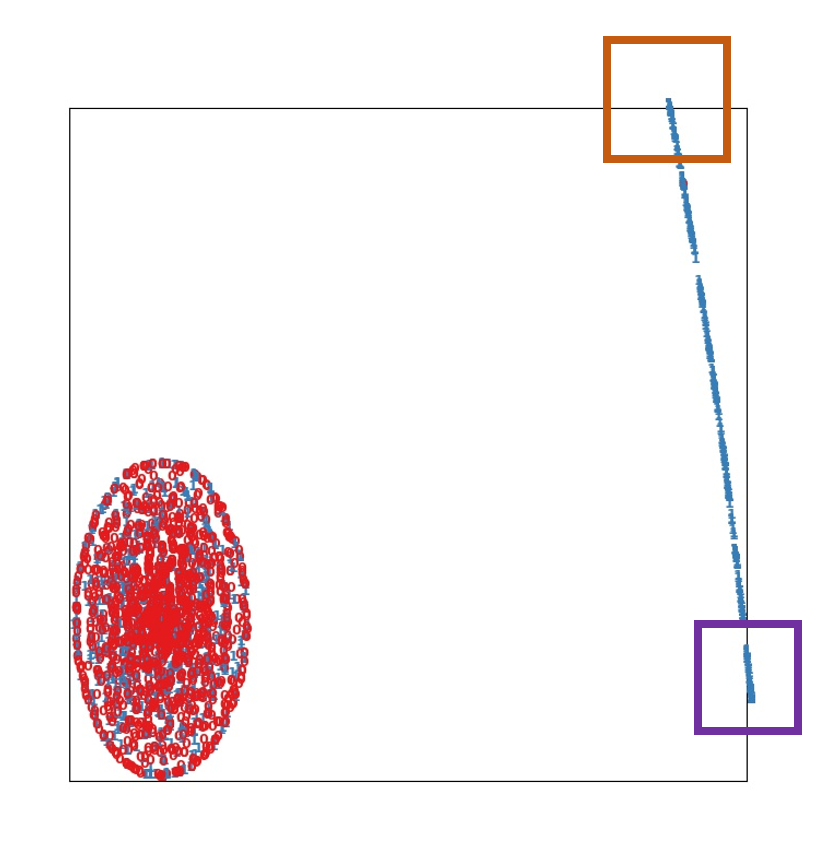}
		\end{center}
	}%

\centering
\vspace{-.1in}
\caption{The t-distributed stochastic neighbor embedding (t-SNE) projection of the persistence images on PubMed. In the figure, the red and blue marks denote the persistence images generated by the negative edge and positive edges respectively.}
\label{fig:tsne}
\vspace{-.15in}
\end{figure*}

\begin{figure*}[btp]
\centering
\subfigure[]{
\begin{minipage}[t]{0.50\linewidth}
\centering
\includegraphics[width=.5\columnwidth]{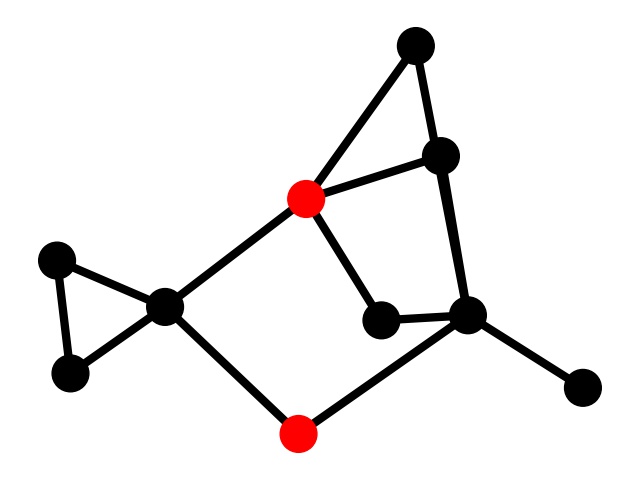}
\end{minipage}
}%
\subfigure[]{
\begin{minipage}[t]{0.50\linewidth}
\centering
\includegraphics[width=.5\columnwidth]{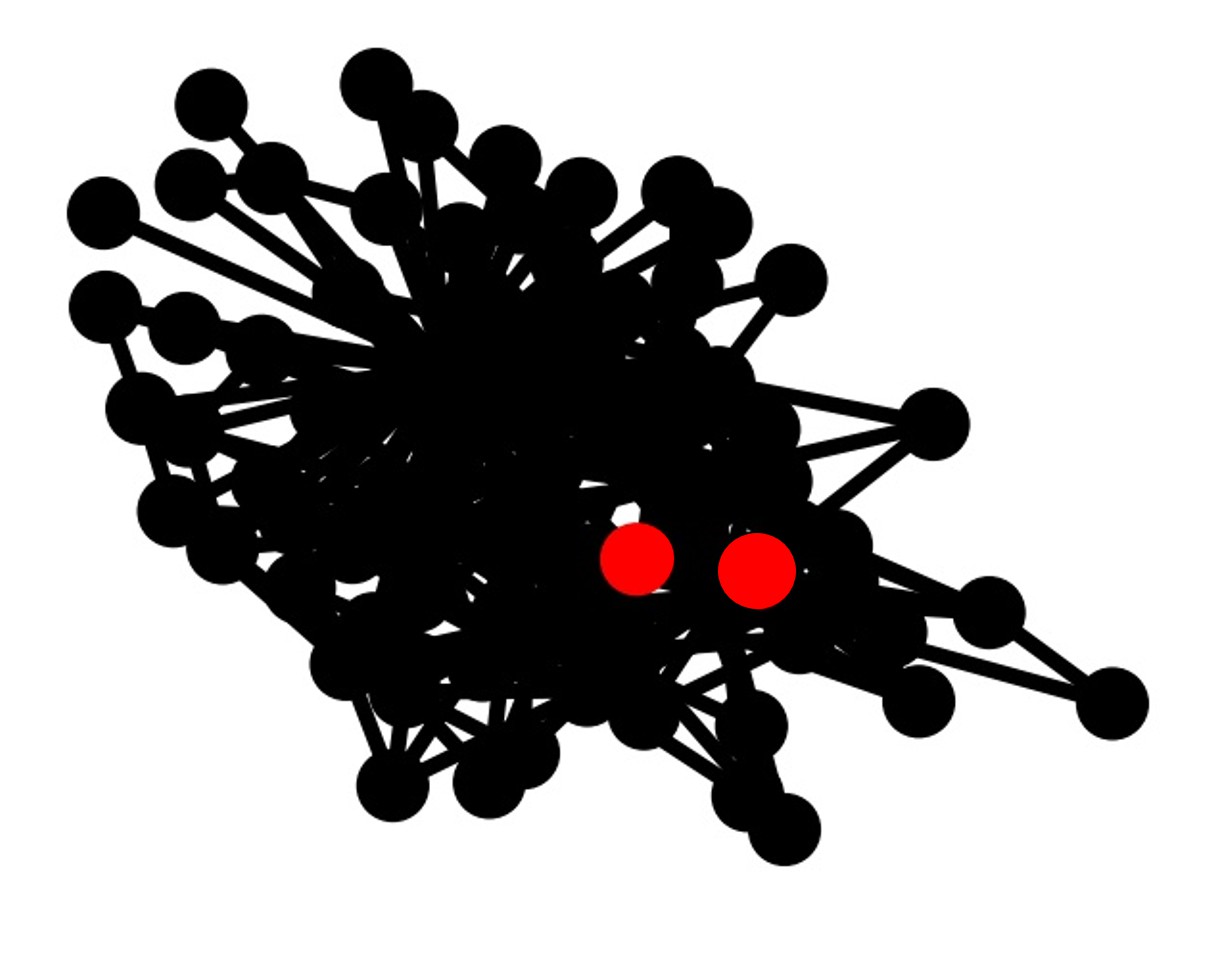}
\end{minipage}
}%
\\
\subfigure[]{
\begin{minipage}[t]{0.50\linewidth}
\centering
\includegraphics[width=0.8\columnwidth]{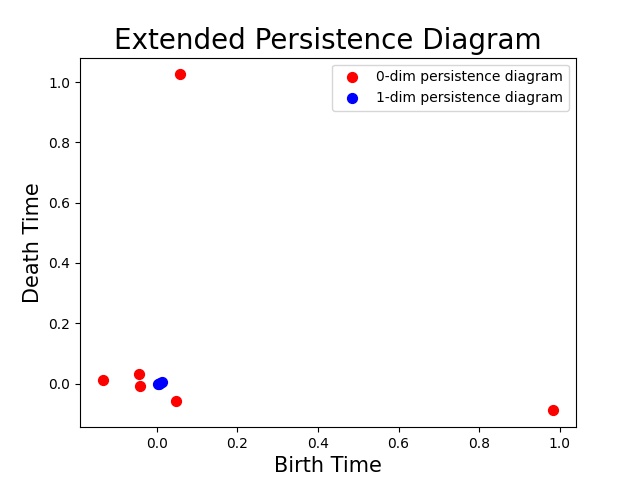}
\end{minipage}%
}%
\subfigure[]{
\begin{minipage}[t]{0.50\linewidth}
\centering
\includegraphics[width=0.8\columnwidth]{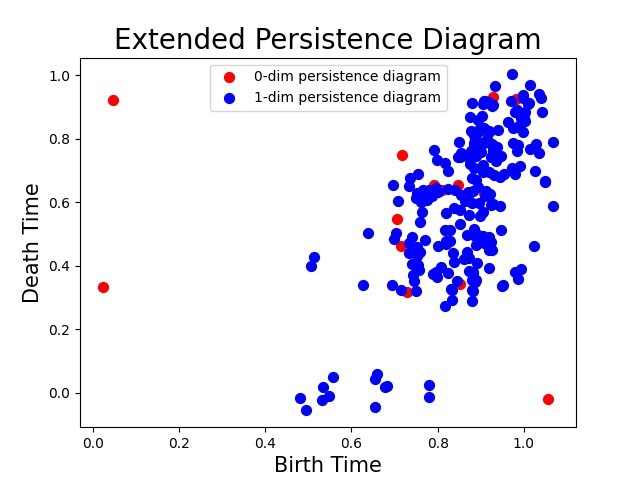}
\end{minipage}%
}%
\centering
\vspace{-.1in}
\caption{We sample two positive edges from the visualization of PubMed in Figure~\ref{fig:tsne}. We draw their subgraphs and diagrams. (a) and (c) represent the sample in the brown box. (b) and (d) represent the sample in the purple box. 
}
\label{fig:pubmed_tsne}
\vspace{-.15in}
\end{figure*}

\subsection*{A.4.2 Detailed Experiment Settings}
\myparagraph{Data split.}  We follow the experimental setting from \cite{chami2019hyperbolic,zhu2020graph} and use 5\% (resp.~10\%) of existing links as positive samples of the validation set (resp.~test set). And equal number of non-existent links are sampled as negative samples of the validation and test set. 
The remaining 85\% existing links are used as the positive training set. In every epoch, we randomly choose the same number of remaining non-existent links as the negative training set. We report the results on test set when the models achieve the best performance on the validation set. 

\myparagraph{Training setting.} On synthetic experiments, we run all the methods on each graph 10 times and report the mean average area under the ROC curve (ROCAUC) scores as the result. On real-word benchmarks, we run all the methods on each graph 50 times and report the mean and standard deviation of ROCAUC scores as the result. All methods use the following training strategy: the same training epochs (2000), and the same early stopping on validation set with 200 patience epochs. The only exception is SEAL; due to its slow training speed and fast convergence, we only train 200 epochs. 

Following \cite{chami2019hyperbolic,zhu2020graph}, during training, we remove positive validation and test edges from the graph.  Cross Entropy Loss is chosen as the loss function and Adam is adopted as the optimizer with the learning rate set to 0.01 and weight-decay set to 0. For fairness, we set the number of node embeddings of the hidden layer and the final layer to be the same (100 and 16) for all networks.  The backbone GNN in our model is a classic 2-layer GCN with one hidden and one output layer. All persistence images in the experiments are 25-dimension. All the activation function used in the graph neural networks is RELU and all the activation function used in Fermi-Dirac decoder (needed in TLC-GNN) is Leakyrelu with negative slope set to 0.2.  

\myparagraph{Details on evaluation of algorithm efficiency.} To evaluate the efficiency of the proposed faster algorithm (Section 5.3 in the main paper), we use the following setting. For the sparse graph PubMed, we compute 0-dimensional and 1-dimensional extended persistence diagrams on all the existing edges. No edges are removed. For large and dense graphs like Photo and Computer, we compute 0-dimensional and 1-dimensional extended persistence diagrams on the first 1000 edges in the default edge list. We run the algorithms on each graph 10 times and report the average seconds per edge as the result. We use a cluster with two Intel Xeon Gold 5128 processors and 192GB RAM to run the two algorithms without multi-threading.

\subsection*{A.4.3 Further experiments}
In this paragraph, we add experiments to evaluate the effect of $k$ (the hop distance to form the enclosing subgraph).
Considering that on large and dense graphs such as Photo and Computers, it costs immensely to compute the persistence image when $k$ is 2, and on all the datasets, computing persistence image when $k$ is larger than 3 takes immense computational cost, we evaluate the effect of $k$ on PubMed and a sampled graph in PPI. As shown in Table~\ref{tab:k}, $k=2$ is generally a good choice in these two datasets. However, it cost much more to compute the persistence images in PPI networks when $k$ = 2, so we finally set $k$ = 1 in PPI datasets.
\begin{table}[t]
	\centering
	\caption{Experimental results(s) on the chosen of hop distance}
	\label{tab:k} 
	\scalebox{0.9}{
		\begin{tabular}{lcc}
			\hline\noalign{\smallskip}
			$k$ & 1 & 2 \\
			\noalign{\smallskip}\hline\noalign{\smallskip}
			PubMed & 96.79 & \textbf{97.03}\\ 
			PPI & 83.92 & \textbf{84.11} \\
			\noalign{\smallskip}\hline
	\end{tabular}}
\end{table}

\subsection*{A.4.4 Visualization of Extended Persistence}
We provide qualitative examples to further illustrate how topological features can help differentiate edges/non-edges.

We use the t-distributed stochastic neighbor embedding (t-SNE) \cite{van2008visualizing} to map the 25-dim persistence images of samples to a 2D plane, as shown in Figure~\ref{fig:tsne}. The persistence images here are created using the Ollivier-Ricci curvature \cite{ni2018network} as the filter function. For each graph, we randomly choose 1500 positive edges and 1500 negative edges.
Figure~\ref{fig:tsne} shows the t-SNE results of PubMed. The red and blue marks represent negative and positive edges respectively. 
Despite some exceptions, negative edges and positive edges are well separated in terms of persistent homology features.  

To further understand the data, we choose 2 positive samples from the t-SNE plot of PubMed (Figure \ref{fig:tsne}) and draw their local enclosing graphs and persistence diagrams. In the t-SNE plot, positive samples form an elongated linear structure. We intentionally sample the two samples from the two ends of the structure. One from the center of the brown box. The other from the center of the purple box. The local enclosing graph and diagram of the first sample is drawn in Figure \ref{fig:pubmed_tsne} (a) (c). The graph and diagram of the second sample is drawn in Figure \ref{fig:pubmed_tsne} (b) (d).


In the graph, the red nodes denote the target nodes, while the other nodes are black. In the persistence diagram, the red and blue markers represent the 0-dimensional and 1-dimensional persistence points respectively. Notice that we add a random jitter to each persistence point so that we can observe the overlapped persistence points.

For the sample in the brown square, we observe that in its enclosing subgraph, there are several loops passing the target nodes. They correspond to 1D persistence points with death time zero in the diagram. The only loop that does not pass the target nodes has the same birth and death time, thus is not shown in the persistence diagram. 

For the sample in the purple square, we observe that there exist many loops in the generated subgraph, and the distribution of the 1-dimensional persistence points mainly concentrate on the top right of the diagram. In addition, more 0-dimensional extended persistence points whose birth time is smaller than its death time appear. 
We observe (1) the density of 1-dimensional extended persistence points gradually increase from the bottom to the upper-right of the diagram. (2) more 0-dimensional extended persistence points from the ascending filtration appear. 

\myparagraph{Discussion.}
From Figure~\ref{fig:tsne} and Figure~\ref{fig:pubmed_tsne}, we observe the following phenomena. Persistence images effectively differentiate positive and negative edges in all graphs. While almost all negative samples form a tight cluster, positive samples form clusters like pieces of 1-manifolds. This makes us wonder whether these clusters can be parameterized by a latent parameter. The selected two samples further suggest the possibility of this hypothesis. The brown and purple samples represent the two extreme of the positive cluster in PubMed. The share common characteristics, e.g., both have rich loops (compared to their number of nodes). Meanwhile, they range from small subgraphs with less loops to dense subgraphs with many loops. To investigate further on these positive sample clusters is an interesting research direction in the future.


\appendix


\end{document}